\documentclass[twoside]{article}

% \usepackage{aistats2026}
% If your paper is accepted, change the options for the package
% aistats2026 as follows:
%
% \usepackage[accepted]{aistats2026}
%
% This option will print headings for the title of your paper and
% headings for the authors names, plus a copyright note at the end of
% the first column of the first page.
%
% We also include a `preprint' option for non-anonymous preprints. 
% Change the options for the package aistats2026 as follows:
%
\usepackage[preprint]{aistats2026}
%
% This option will print headings for the title of your paper and
% headings for the authors names, but does not print the copyright and 
% venue note at the end of the first column of the first page.

% If you set papersize explicitly, activate the following three lines:
%\special{papersize = 8.5in, 11in}
%\setlength{\pdfpageheight}{11in}
%\setlength{\pdfpagewidth}{8.5in}

% If you use the natbib package, activate the following three lines:
\usepackage[round]{natbib}
\setcitestyle{authoryear, open={(},close={)}}

% If you use BibTeX in apalike style, activate the following line:
\bibliographystyle{apalike}

% Additional packages
\usepackage{amsmath,amssymb,amsfonts}
\usepackage{bm}
\usepackage{mathtools}
\usepackage{graphicx}
\usepackage{subcaption}
\usepackage{booktabs}
\usepackage{colortbl}
\usepackage{xcolor}
\usepackage{enumitem}
\usepackage{siunitx}
\usepackage{float}

\definecolor{bestOurs}{RGB}{0,0,255}   % blue
\definecolor{bestOther}{RGB}{255,0,0}  % red
\definecolor{btmgray}{gray}{0.9}       % light gray 
\newcommand{\bestOurs}[1]{\textbf{\textcolor{bestOurs}{#1}}}
\newcommand{\bestOther}[1]{\textbf{\textcolor{bestOther}{#1}}}
\newcolumntype{B}{!{\vrule width 1pt}}

\usepackage{multirow}
\usepackage{array}
\usepackage{microtype}
\usepackage{algorithm}
\usepackage{algorithmic}
\usepackage{amsthm}

\usepackage{hyperref}
\hypersetup{
    colorlinks=true,  % Enable colored links
    linkcolor=blue,   % Internal links
    citecolor=blue,   % Citations
    urlcolor=blue     % URLs
}
\usepackage{xr-hyper}

\newtheorem{theorem}{Theorem}

\begin{document}

% If your paper is accepted and the title of your paper is very long,
% the style will print as headings an error message. Use the following
% command to supply a shorter title of your paper so that it can be
% used as headings.
%
%\runningtitle{I use this title instead because the last one was very long}

% If your paper is accepted and the number of authors is large, the
% style will print as headings an error message. Use the following
% command to supply a shorter version of the author names so that
% they can be used as headings (for example, use only the surnames)
%
%\runningauthor{Surname 1, Surname 2, Surname 3, ...., Surname n}

% \twocolumn[

% \aistatstitle{Improving Clinical Dataset Condensation with Mode Connectivity–based Trajectory Surrogates}

% \aistatsauthor{Pafue Christy Nganjimi^{1} \And Andrew Soltan^{1} \And  Danielle Belgrave^{2} \and Lei Clifton^{3} \And David A. Clifton^{1,4} \And Anshul Thakur^{1}}
% % \runningauthor{Nganjimi, Soltan, Belgrave, Clifton, Clifton, Thakur}

% \aistatsaddress{Institute of Biomedical Engineering, University of Oxford, UK \And  GlaxoSmithKline, London, UK \And Nuffield Department of Primary Care Health Sciences, University of Oxford, UK \And Oxford-Suzhou Institute of Advanced Research (OSCAR), Suzhou, China } ]

\runningauthor{Nganjimi et al.}

\twocolumn[
\aistatstitle{Improving Clinical Dataset Condensation with Mode Connectivity–based Trajectory Surrogates}

\aistatsauthor{
Pafue Christy Nganjimi$^{1}$ \And Andrew Soltan$^{1}$ \And Danielle Belgrave$^{2}$ \And
Lei Clifton$^{3}$ \AND David A. Clifton$^{1,4}$ \And Anshul Thakur$^{1}$
}

\aistatsaddress{\\
$^{1}$Institute of Biomedical Engineering, University of Oxford, UK \\
$^{2}$GlaxoSmithKline, London, UK \\
$^{3}$Nuffield Department of Primary Care Health Sciences, University of Oxford, UK \\
$^{4}$Oxford-Suzhou Institute of Advanced Research (OSCAR), Suzhou, China
}
]

\begin{abstract}
  Dataset condensation (DC) enables the creation of compact, privacy-preserving synthetic datasets that can match the utility of real patient records, supporting democratised access to highly regulated clinical data for developing downstream clinical models. State-of-the-art DC methods supervise synthetic data by aligning the training dynamics of models trained on real and those trained on synthetic data, typically using full stochastic gradient descent (SGD) trajectories as alignment targets; however, these trajectories are often noisy, high-curvature, and storage-intensive, leading to unstable gradients, slow convergence, and substantial memory overhead. We address these limitations by replacing full SGD trajectories with smooth, low-loss parametric surrogates, specifically quadratic Bézier curves that connect the initial and final model states from real training trajectories. These mode-connected paths provide noise-free, low-curvature supervision signals that stabilise gradients, accelerate convergence, and eliminate the need for dense trajectory storage. We theoretically justify Bézier-mode connections as effective surrogates for SGD paths and empirically show that the proposed method outperforms state-of-the-art condensation approaches across five clinical datasets, yielding condensed datasets that enable clinically effective model development.

\end{abstract}

%%%%%%%%%%%%%%%%%%%%%%%%%%%%%%%%%%%%%%%%%%%%%%%%%%%%%%%%%%%%
\section{INTRODUCTION}
\label{sec:intro}

\begin{figure*}[t]
    \centering
    \begin{subfigure}[t]{0.32\linewidth}
        \centering
        \includegraphics[width=\linewidth]{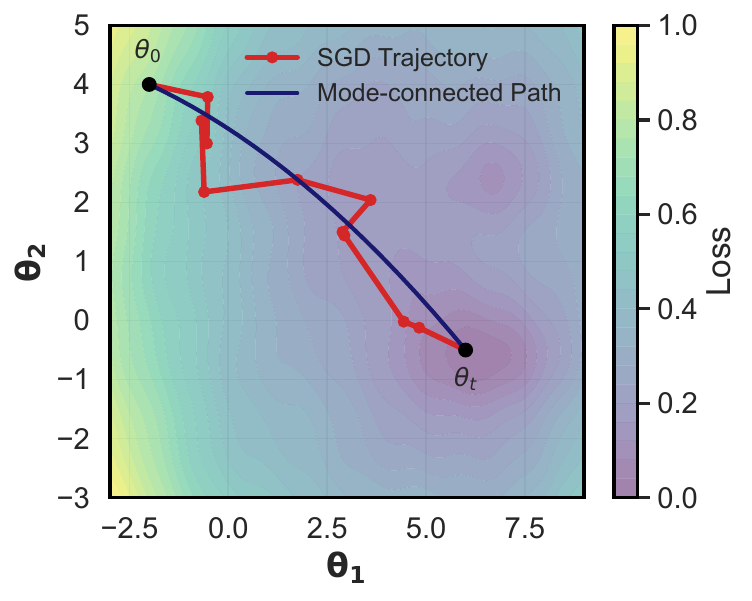}
        \subcaption{Parameter-space Paths}
        \label{fig:tm_param_space}
    \end{subfigure}
    \hfill
    \begin{subfigure}[t]{0.32\linewidth}
        \centering
        \includegraphics[width=\linewidth]{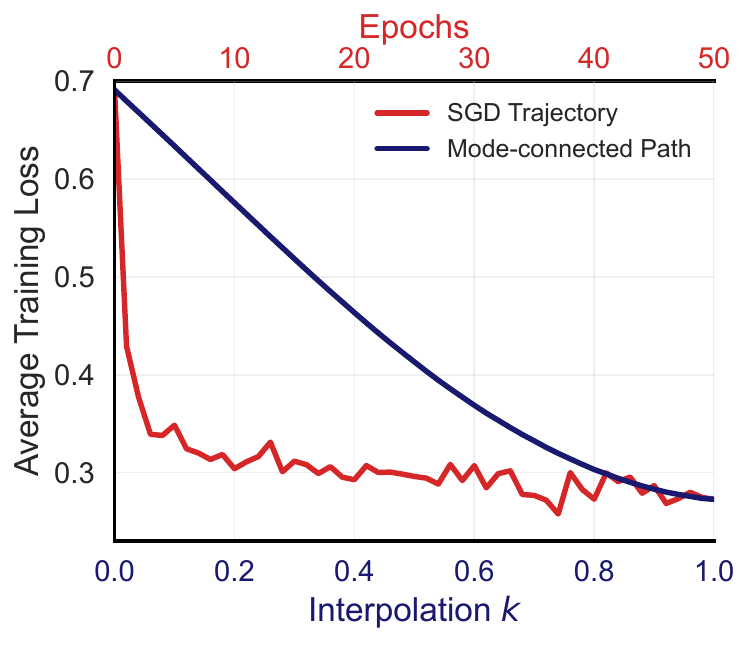}
        \subcaption{Training Set Loss Along Paths}
        \label{fig:tm_loss_path}
    \end{subfigure}
    \hfill
    \begin{subfigure}[t]{0.32\linewidth}
        \centering
        \includegraphics[width=\linewidth]{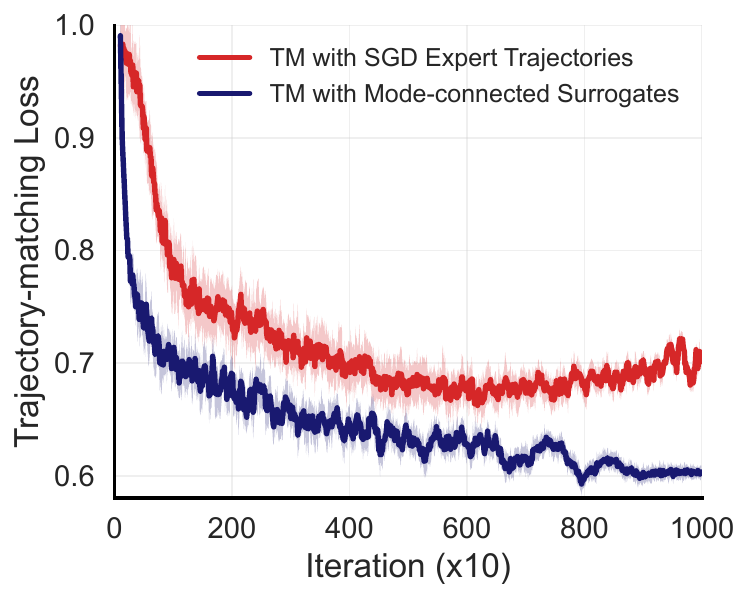}
        \subcaption{Condensation Loss Convergence}
        \label{fig:tm_convergence}
    \end{subfigure}
    \caption{
        Illustration of the key differences between traditional SGD trajectories and mode-connected paths in TM.  
        (a) SGD trajectories are noisy and require many intermediate checkpoints, whereas mode-connected paths give smooth, direct connections using only the start and end models.  
        (b) The average loss of the training set fluctuates and has high curvature along SGD trajectories, while mode-connected paths yield a stable, smoothly decreasing profile.  
        (c) Mode-connected surrogates accelerate optimisation and reach lower trajectory-matching loss than raw SGD trajectories.
    }
    \label{fig:tm_vs_sgd}
\end{figure*}

Clinical artificial intelligence (AI) advances not by algorithms alone but by access to large, well-characterised datasets drawn from electronic health records (EHRs), disease registries, and biobanks \citep{rajkomar2019machine}. Yet access to clinical data is tightly constrained by privacy regulations and institutional governance frameworks designed to protect sensitive patient data \citep{thakur2024data,shabani2019gdpr}. These constraints slow methodological innovation by limiting opportunities to develop and rigorously evaluate models on real-world cohorts, undermining generalisability and clinical robustness \citep{topol2019high}. Dataset condensation (DC) addresses this gap by synthesising a compact, task-preserving proxy that approaches full data performance at a fraction of the storage and training cost \citep{li2025dataset}. The condensation objective typically yields synthetic samples that act as latent summaries of many underlying real samples, weakening any one-to-one correspondence with individual patients \citep{wang2023medical}. When paired with differential privacy, DC provides formal guarantees that patient-level information cannot be inferred \citep{carlini2022no}.

Trajectory matching (TM) constitutes a prominent class of methods that define the current state of the art in DC \citep{cazenavette2022dataset, liu2023tesla, du2023ftd}. The central idea is to optimise the synthetic dataset by explicitly aligning the parameter trajectories induced when training a model on the synthetic data with those obtained from training on the real data. This alignment objective ensures that the condensed dataset not only preserves discriminative information but also replicates the training dynamics of the original data, thereby improving generalisation and downstream performance \citep{guo2024datm}. Despite their empirical success, TM methods face several practical limitations. They typically require storing and sampling from multiple intermediate checkpoints along each training trajectory, incurring substantial storage overhead. Moreover, stochastic gradient descent (SGD) trajectories are inherently noisy and frequently traverse high-curvature regions of the loss landscape, leading to high variability and unstable supervision signals for the condensation process (see Fig.~\ref{fig:tm_vs_sgd}a–b). These issues can lead to slower optimisation, degraded alignment quality, and limited generalisation across architectures.

To address these limitations, we replace full SGD trajectories with smooth, low-loss parametric surrogates computed via mode connectivity \citep{garipov2018loss,thakur2025optimising}. For each real-data training trajectory, we learn a mode-connected path by training a quadratic Bézier curve in parameter space that links the initial (high-loss) and final (low-loss) model states; the curve is defined by a single control point, which we optimise to minimise the average training loss along the path. Crucially, this path preserves the same endpoints and overall optimisation direction as the original trajectory while smoothing away stochastic, high-curvature detours, yielding a fixed, low-curvature, low-loss supervision target once fitted. Optimisation-wise, removing stochastic sampling noise from the target and enforcing low curvature produce a well-conditioned, low-variance signal in which gradients vary smoothly along the path. This improves conditioning, stabilises updates, and accelerates convergence, reducing computational cost without sacrificing alignment fidelity. From a storage standpoint, each trajectory requires storing only three points: the initial checkpoint, the final checkpoint, and the learned control point. The intermediate states can be computed on the fly (see Section~\ref{subsec:surr-hyp}), eliminating the need to store multiple intermediate checkpoints and yielding order-of-magnitude savings.

Building on these mode-connected surrogates, we formulate a DC framework that supervises synthetic data via alignment to smooth, low-loss trajectories. Rather than aligning to noisy SGD paths, we sample pairs of nearby points along the surrogate curves and optimise the synthetic dataset so that a model trained on it advances between them. This retains the trajectory-level supervision signal while improving stability, reducing variance, and eliminating the need to store full optimisation traces. The major contributions of this work are listed below: 

% \begin{itemize}[noitemsep, topsep=0pt]
\begin{itemize}[topsep=0pt]
    \item We introduce mode connectivity–based trajectory surrogates (quadratic Bézier curves) as compact, low-curvature replacements for full SGD paths in TM-based DC framework. This proposed mechanism can be employed in any existing TM framework.
    \item We theoretically justify the use of mode connectivity, specifically quadratic Bézier curves, as an effective surrogate for full optimisation trajectories, explaining why it preserves essential training signals while avoiding SGD stochasticity.
    \item We conduct an extensive empirical evaluation on five real-world clinical datasets, demonstrating that the proposed approach produces clinically useful condensed datasets that support reliable downstream model development. It achieves performance comparable to real-data training and consistently outperforms state-of-the-art baselines across the evaluated clinical datasets.
\end{itemize}

%%%%%%%%%%%%%%%%%%%%%%%%%%%%%%%%%%%%%%%%%%%%%%%%%%%%%%%%%%%%
\section{RELATED WORKS}
\label{sec:rel_works}

\textsc{Dataset Condensation:} 
Dataset Distillation \citep{wang2018dataset} pioneered synthetic dataset synthesis through nested bilevel optimisation, addressing the computational burden of training on massive real-world datasets by creating dramatically smaller yet equally effective alternatives. Gradient matching approaches \citep{zhao2021dataset} improved computational efficiency by avoiding expensive inner-loop unrolling, while soft-label extensions \citep{sucholutsky2019soft, bohdal2020flexible} explored label optimisation. Kernel-based methods \citep{nguyen2021dataset, nguyen2022dataset, jacot2018neural} reformulated the problem using neural tangent kernel theory, offering closed-form solutions but with heavy computational costs. Distribution matching methods \citep{zhao2023dataset, wang2022cafe, zhao2023improved, zhang2024m3d} achieved efficiency gains by aligning feature statistics without bilevel optimisation, though often requiring larger synthetic sets to achieve strong performance. Recent image-focussed decoupling approaches like Squeeze, Recover and Relabel \citep{yin2023sre2l} and Realistic, Diverse, and Efficient Dataset Distillation \citep{sun2023rded} separate synthesis from training optimisation. While effective on vision datasets, these methods transfer poorly to structured clinical data, where performance gaps remain significant.

\textsc{Trajectory Matching:} 
TM narrows this gap by supervising synthetic data with the training dynamics of real data. Matching Training Trajectories (MTT) \citep{cazenavette2022dataset} first aligned long-range trajectories, capturing richer information than stepwise gradient matching. Extensions such as Flat Trajectory Distillation (FTD) \citep{du2023ftd}, Difficulty-Aligned Trajectory Matching (DATM) \citep{guo2024datm}, and TrajEctory matching with Soft Label Assignment (TESLA) \citep{liu2023tesla} introduce curvature regularisation, difficulty-based curricula, and scalable soft-label assignment respectively. Despite strong results, TM methods depend on dense trajectories—tens to hundreds of checkpoints—and inherit the noise and curvature of SGD paths, inflating storage and introducing instability that limits clinical use.

\textsc{Mode Connectivity:}
Mode connectivity studies the geometry of neural loss landscapes by constructing smooth, low-loss paths between trained models, most commonly parameterised as quadratic Bézier curves \citep{garipov2018loss, draxler2018essentially, izmailov2018averaging}. These paths preserve endpoint performance while bypassing high-loss regions, and have been used to analyse generalisation and reparameterisation. In clinical machine learning, they have supported incremental learning and mitigated distribution shift \citep{thakur2023clinical}. Requiring only two endpoints and a single control point, mode connections provide compact yet expressive representations of optimisation paths.

\textsc{Comparison With Proposed Approach:}
These approaches present distinct trade-offs: non-TM methods are efficient but struggle with structured clinical data; TM captures richer dynamics but suffers from SGD instability and heavy storage requirements; mode connectivity provides compact, smooth paths but remains unexplored for DC. This work bridges these limitations by replacing noisy SGD trajectories with smooth mode-connected surrogates, combining TM's supervisory power with mode connectivity's efficiency for effective clinical data synthesis.

%%%%%%%%%%%%%%%%%%%%%%%%%%%%%%%%%%%%%%%%%%%%%%%%%%%%%%%%%%%%
\section{BACKGROUND}
\label{sec:background}

\textsc{Problem Statement:} Given a large real dataset $\mathcal{D} = \{(\boldsymbol{x}_i, \boldsymbol{y}_i)\}_{i=1}^{|\mathcal{D}|}$, where  $\boldsymbol{x}_i \in \mathbb{R}^d$ is an example and $\boldsymbol{y}_i$ its corresponding class label, DC seeks to learn a much smaller synthetic dataset $\tilde{\mathcal{D}} = \{(\boldsymbol{\tilde{x}}_i, \boldsymbol{\tilde{y}}_i)\}_{i=1}^{|\tilde{\mathcal{D}}|}$ with $|\tilde{\mathcal{D}}| \ll |\mathcal{D}|$. The objective is that models trained on $\tilde{\mathcal{D}}$ achieve comparable performance to those trained on $\mathcal{D}$. In practice, only a few synthetic samples per class (often referred to as \emph{instances per class}) are retained, making the overall synthetic dataset orders of magnitude smaller than the original. 

Formally, DC can be expressed as the optimisation problem:
\begin{equation}
\begin{split}
    \tilde{\mathcal{D}}^* &= \arg\min_{\tilde{\mathcal{D}}} \;
    \mathbb{E}_{\boldsymbol{\theta}_0 \sim \mathcal{P}}
    \big[ \mathcal{L}_{\mathrm{val}}(f_{\boldsymbol{\theta}_T}) \big], \\
    &\quad \text{where } \boldsymbol{\theta}_T \leftarrow 
    \mathrm{Train}(f_{\boldsymbol{\theta}_0}, \tilde{\mathcal{D}}).
\end{split}
\label{eq:dc}
\end{equation}

Here, $\boldsymbol{\theta}_0 \sim \mathcal{P}$ denotes parameters drawn from the initialisation distribution, $\boldsymbol{\theta}_T$ are the parameters obtained after training on the synthetic dataset $\tilde{\mathcal{D}}$, and $\mathcal{L}_{\mathrm{val}}$ is the validation loss of the trained model $f_{\boldsymbol{\theta}_T}$. 
The goal is to find a synthetic dataset $\tilde{\mathcal{D}}^*$ that minimises this expected validation loss, ensuring that training from scratch on $\tilde{\mathcal{D}}$ yields models that generalise as well as those trained on the full dataset.

\textsc{Trajectory Matching Framework:} TM supervises synthetic dataset learning using \emph{expert trajectories}, the optimisation checkpoints $\{\boldsymbol{\theta}_k\}_{k=0}^K$ obtained by training a model on the real dataset $\mathcal{D}$. During condensation, a segment of length $M$ is sampled by starting from checkpoint $\boldsymbol{\theta}_k$ and targeting a later point $\boldsymbol{\theta}_{k+M}$. A student model initialised at $\boldsymbol{\tilde{\theta}}_{k,0} = \boldsymbol{\theta}_k$ is trained on the synthetic dataset $\tilde{\mathcal{D}}$ for $N$ steps, and $\tilde{\mathcal{D}}$ is updated to minimise the displacement between the student’s final parameters $\boldsymbol{\tilde{\theta}}_{k,N}$ and the target $\boldsymbol{\theta}_{k+M}$. Repeated alignment across segments teaches $\tilde{\mathcal{D}}$ to guide optimisation from initialisation to convergence. While effective, TM requires storing many intermediate checkpoints per trajectory, creating storage overhead, and its supervision inherits the stochastic, indirect paths of SGD, leading to instability.

%%%%%%%%%%%%%%%%%%%%%%%%%%%%%%%%%%%%%%%%%%%%%%%%%%%%%%%%%%%%
\section{PROPOSED METHOD}
\label{sec:prop_meth}
The limitations of TM frameworks raise a central question: does DC require replicating the exact, noisy dynamics of SGD, or can smoother surrogate paths provide equally effective supervision? This section addresses this question by introducing a surrogate-path hypothesis, analysing its theoretical properties, and presenting a condensation framework built on these paths.

\subsection{Surrogate-path Hypothesis}
\label{subsec:surr-hyp}
The core role of TM is to teach synthetic datasets how to guide models from initialisation towards well-performing parameter regions in the loss landscape. This does not necessarily require exact replication of the stochastic, noisy trajectories followed by SGD; what matters is providing reliable guidance between checkpoints. This motivates the hypothesis that \emph{smooth, low-loss paths connecting the initial and final model states---ideally with a monotonically or near-monotonically decreasing loss profile---can serve as effective surrogates for the noisy trajectories followed by SGD}. Analyses of neural network loss landscapes show that multiple such non-linear low-loss paths exist, and can act as stable alternatives to the noisy SGD trajectories used in TM frameworks.

Mode connectivity provides a principled way to realise these surrogate paths. A mode connection, denoted $\gamma_{\boldsymbol{\theta}_A \rightarrow \boldsymbol{\theta}_B}$, is defined as a low-loss path in parameter space between two optima $\boldsymbol{\theta}_A, \boldsymbol{\theta}_B \in \mathbb{R}^N$. Along this path, every intermediate point remains an effective optimum and no loss barriers are encountered, so that the loss decreases monotonically from $\boldsymbol{\theta}_A$ to $\boldsymbol{\theta}_B$. This highlights that mode connections essentially exhibit the characteristics required to be the surrogate trajectories. 

To approximate a training trajectory, the connection must extend beyond optima to link an initial high-loss model state $\boldsymbol{\theta}_0$ with a converged model state $\boldsymbol{\theta}_T$. We therefore employ a parameterised quadratic Bézier curve $\Phi_{\boldsymbol{\phi}}(t)$ \citep{garipov2018loss} as a non-linear mode connection:
\begin{equation}
    \Phi_{\boldsymbol{\phi}}(t) = (1-t)^2 \boldsymbol{\theta}_0 + 2t(1-t)\boldsymbol{\phi} + t^2 \boldsymbol{\theta}_T,
\label{eq:bz}
\end{equation}

where $\boldsymbol{\phi} \in \mathbb{R}^N$ is the trainable control point of the curve. The variable $t \in [0,1]$ determines the position along the path, with $t=0$ corresponding to $\boldsymbol{\theta}_0$ and $t=1$ to $\boldsymbol{\theta}_T$. The curve $\Phi_{\boldsymbol{\phi}}$ is fitted by updating control point $\boldsymbol{\phi}$ to minimise the training loss at sampled points along the path using dataset $\mathcal{D}$, thereby yielding the desired mode connection $\gamma_{\boldsymbol{\theta}_0 \rightarrow \boldsymbol{\theta}_T}$ \citep{thakur2025optimising}:
\begin{equation}
\label{eq:phi_optim}
    \boldsymbol{\phi}^\star = \arg\min_{\boldsymbol{\phi}}
    \mathbb{E}_{t \sim \mathcal{U}(0,1)}
    \big[\mathcal{L}_{\mathrm{train}}(\Phi_{\boldsymbol{\phi}}(t), \mathcal{D})\big].
\end{equation}

The resulting path, $\Phi_{\boldsymbol{\phi}^{\star}}(t)$, is fully determined by just three parameter vectors: the initial state $\boldsymbol{\theta}_0$, the final state $\boldsymbol{\theta}_T$, and the learned control point $\boldsymbol{\phi}^\star$. Hence, we obtain a surrogate for an SGD trajectory that is both noise-free and low-curvature (Figure~\ref{fig:tm_vs_sgd}a), reliably connects high-loss and low-loss regions in parameter space (Figure~\ref{fig:tm_vs_sgd}b), and eliminates the need to store the full trajectory.

\subsection{Theoretical Guarantees of Bézier Surrogates} 
\label{subsec:bez-theory}

Bézier curves provide not only a faithful surrogate to noisy SGD trajectories but also a potentially superior alternative for optimisation. An optimised Bézier path retains the functional behaviour of the trajectory—maintaining low average loss and consistent predictions—while offering two crucial advantages: it is noise-free and exhibits strictly lower curvature. These properties make the surrogate path more stable, easier to follow, and less sensitive to stochastic fluctuations. In this sense, Bézier surrogates are not just approximations of SGD trajectories, but improved optimisation paths that preserve essential functionality while reducing geometric complexity. Theorem \ref{thm:bez-sur} formalises these guarantees.

\begin{theorem}
\label{thm:bez-sur}
Let $\mathcal{L}:\Theta \to \mathbb{R}$ be a $\beta$-smooth, lower-bounded loss function, $\boldsymbol{\theta}_0 \in \Theta$ be a random initialisation with loss $\ell_0 = \mathcal{L}(\boldsymbol{\theta}_0)$, and $\boldsymbol{\theta}_T$ be an SGD endpoint after $K$ steps such that
$\|\nabla \mathcal{L}(\boldsymbol{\theta}_T)\| \le \varepsilon$, where $\mathcal{L}(\boldsymbol{\theta}_T) = \ell_T \ll \ell_0.$ 

\noindent Let $\gamma(t)$ denote the piecewise-linear interpolation of the SGD iterates $\{\boldsymbol{\theta}_k\}_{k=0}^K$. Also, let $\Phi_{\boldsymbol{\phi}}(t)$ denote the quadratic Bézier curve with control point $\boldsymbol{\phi} \in \Theta$ as defined in Eq.~\eqref{eq:bz}. Define the optimised Bézier path and its curvature $\kappa$ as
\begin{equation*}
    \Phi_{\boldsymbol{\phi}^\star}(t) := \Phi_{\boldsymbol{\phi}}(t) \quad \text{with} \quad 
    \boldsymbol{\phi}^\star = \arg\min_{\boldsymbol{\phi}} \int_0^1 \mathcal{L}(\Phi_{\boldsymbol{\phi}}(t))\, dt,
\end{equation*}
\begin{equation*}
    \kappa := 2\|\boldsymbol{\theta}_0 - 2\boldsymbol{\phi}^\star + \boldsymbol{\theta}_T\|.
\end{equation*}
\noindent Assume the model map $f_{\boldsymbol{\theta}}(\boldsymbol{x})$ is $L_f$-Lipschitz in $\boldsymbol{\theta}$ for every $\boldsymbol{x} \in \mathcal{X}$. Then:

\begin{enumerate}
\itemsep 0.2em
    \item[\textbf{(i)}] \textbf{Average loss along the Bézier path is near-optimal:}
    \begin{equation}
    \label{eq:bez_loss}
        \int_0^1 \mathcal{L}(\Phi_{\boldsymbol{\phi}^\star}(t))\, dt 
        \le
        \int_0^1 \mathcal{L}(\gamma(t))\, dt + \frac{\beta \kappa^2}{240}.
    \end{equation}
    
    \item[\textbf{(ii)}] \textbf{Bézier path has lower and noise-free curvature compared to SGD trajectory:}
    \begin{equation}
    \label{eq:bez_noise}
        \sup_{t \in [0, 1]} \|\Phi_{\boldsymbol{\phi}^\star}''(t)\| = \kappa, \quad
        \mathbb{E}\left[\sup_t \|\gamma''(t)\|\right] \ge \kappa + c \cdot \sigma_{\mathrm{sgd}},
    \end{equation}
    where $\sigma_{\mathrm{sgd}}^2$ is the variance of stochastic gradient noise, and $c > 0$ is a constant dependent on the step size.

    \item[\textbf{(iii)}] \textbf{Model predictions along Bézier path remain close to those along SGD:}
    \begin{equation}
    \label{eq:bez_pred}
        \sup_{\boldsymbol{x} \in \mathcal{X},\; t \in [0, 1]} 
        \|f_{\Phi_{\boldsymbol{\phi}^\star}(t)}(\boldsymbol{x}) - f_{\gamma(t)}(\boldsymbol{x})\| \le \frac{L_f \kappa}{8}.
    \end{equation}

\end{enumerate}

\noindent Hence, the optimised Bézier path $\Phi_{\boldsymbol{\phi}^\star}(t)$ serves as a smooth, low-loss, and functionally faithful surrogate to the original SGD trajectory.
\end{theorem}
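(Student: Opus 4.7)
I would prove the three claims as three separate lemmas, linked by a single algebraic identity. Writing $\Phi_{\mathrm{lin}}(t) := (1-t)\boldsymbol{\theta}_0 + t\boldsymbol{\theta}_T$ for the straight chord, a direct expansion of Eq.~\eqref{eq:bz} gives
\begin{equation*}
    \Phi_{\boldsymbol{\phi}}(t) - \Phi_{\mathrm{lin}}(t) \;=\; -t(1-t)\bigl(\boldsymbol{\theta}_0 - 2\boldsymbol{\phi} + \boldsymbol{\theta}_T\bigr),
\end{equation*}
so the deviation of any Bézier curve from the chord is exactly $t(1-t)$ times a constant vector whose norm is $\kappa/2$ when $\boldsymbol{\phi} = \boldsymbol{\phi}^\star$. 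All three bounds in the theorem are then obtained by combining this identity with the elementary integrals $\sup_{t} t(1-t) = 1/4$ and $\int_0^1 [t(1-t)]^2\,dt = 1/30$; the constants $1/8$ in (iii) and $1/240$ in (i) are precisely $\tfrac{1}{2}\cdot\tfrac{1}{4}$ and $\tfrac{\beta}{2}\cdot\tfrac{1}{4}\cdot\tfrac{1}{30}$.

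\textbf{Part (i).} I would argue by optimality: because $\boldsymbol{\phi}^\star$ minimises $\int_0^1 \mathcal{L}(\Phi_{\boldsymbol{\phi}}(t))\,dt$, we have $\int \mathcal{L}(\Phi_{\boldsymbol{\phi}^\star}(t))\,dt \le \int \mathcal{L}(\Phi_{\boldsymbol{\phi}^\dagger}(t))\,dt$ for any auxiliary $\boldsymbol{\phi}^\dagger$. I would take $\boldsymbol{\phi}^\dagger$ to be the weighted $L^2$-best fit of a Bézier to $\gamma$ with respect to the weight $2t(1-t)$, which admits a closed form. This choice annihilates the linear cross-term $\int \langle \nabla \mathcal{L}(\gamma(t)), \Phi_{\boldsymbol{\phi}^\dagger}(t) - \gamma(t)\rangle\,dt$ in the $\beta$-smooth expansion about $\gamma(t)$, leaving only the quadratic remainder, which is bounded by $\tfrac{\beta}{2}\int \|\Phi_{\boldsymbol{\phi}^\dagger}(t) - \Phi_{\mathrm{lin}}(t)\|^2\,dt = \beta\kappa^2/240$.

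\textbf{Parts (ii) and (iii).} The upper bound in (ii) is immediate: differentiating Eq.~\eqref{eq:bz} twice in $t$ yields $\Phi_{\boldsymbol{\phi}^\star}''(t) = 2(\boldsymbol{\theta}_0 - 2\boldsymbol{\phi}^\star + \boldsymbol{\theta}_T)$, constant in $t$ with norm $\kappa$. For the SGD lower bound, I would study $\gamma$'s distributional second derivative at the kinks $t_k = k/K$, which is proportional to $\boldsymbol{\theta}_{k+1} - 2\boldsymbol{\theta}_k + \boldsymbol{\theta}_{k-1} = -\eta(g_k - g_{k-1})$; decomposing each stochastic gradient into a true-gradient term and independent noise of variance $\sigma_{\mathrm{sgd}}^2$ and taking a maximum over the $K$ kinks produces an additive contribution $c\,\sigma_{\mathrm{sgd}}$ on top of the deterministic curvature, via a Chebyshev-type lower bound on $\mathbb{E}\max_k \|\xi_k - \xi_{k-1}\|$. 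For (iii), the Lipschitz assumption reduces the claim to $\sup_t \|\Phi_{\boldsymbol{\phi}^\star}(t) - \gamma(t)\| \le \kappa/8$; the identity above and $\sup_t t(1-t) = 1/4$ give the bound exactly against $\Phi_{\mathrm{lin}}$, and it extends to $\gamma$ under the implicit small-drift regime in which the SGD iterates remain close to the chord.

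\textbf{Main obstacle.} The delicate step is part (i): unlike (ii) or (iii), the bound $\beta\kappa^2/240$ is stated against the noisy SGD path $\gamma$ rather than against the chord, so some care is required to prevent a cross-term involving $\|\Phi_{\boldsymbol{\phi}^\dagger} - \gamma\|$ from inflating the constant. Engineering $\boldsymbol{\phi}^\dagger$ so that this cross-term vanishes, and not merely cancels up to an additive $\mathcal{O}(\|\gamma - \Phi_{\mathrm{lin}}\|_{L^2}^2)$, is the crux of the argument. The same chord-vs-$\gamma$ gap reappears in (iii), and I believe both are absorbed by an (unstated) assumption that the SGD trajectory concentrates around the chord connecting $\boldsymbol{\theta}_0$ and $\boldsymbol{\theta}_T$; making that assumption explicit is the first thing I would revisit in a careful write-up.
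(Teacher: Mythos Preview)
Your treatment of parts (ii) and (iii) is essentially the paper's: both differentiate Eq.~\eqref{eq:bz} twice to obtain the constant second derivative $2(\boldsymbol{\theta}_0-2\boldsymbol{\phi}^\star+\boldsymbol{\theta}_T)$ of norm $\kappa$, both analyse the discrete second difference $\boldsymbol{\theta}_{k+1}-2\boldsymbol{\theta}_k+\boldsymbol{\theta}_{k-1}$ via the SGD update rule to isolate a $c\,\sigma_{\mathrm{sgd}}$ contribution, and both reduce (iii) to $\|\Phi_{\boldsymbol{\phi}^\star}(t)-\gamma(t)\|\le \kappa/8$ via the Lipschitz assumption and $\sup_t t(1-t)=1/4$. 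You are right that the deviation identity you wrote is literally a bound against the chord $\Phi_{\mathrm{lin}}$ rather than against $\gamma$; the paper closes this gap not by a small-drift assumption but by citing a Bézier interpolation estimate, $\|\Phi_{\boldsymbol{\phi}^\star}(t)-\gamma(t)\|\le \tfrac{t(1-t)}{2}\sup_u\|\Phi_{\boldsymbol{\phi}^\star}''(u)-\gamma''(u)\|$, and then ``conservatively'' bounding the supremum on the right by $\kappa$.

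Part (i) is where you depart from the paper, and where your argument has a genuine gap. The paper never invokes the optimality of $\boldsymbol{\phi}^\star$: it applies $\beta$-smoothness directly in the form $\mathcal{L}(\Phi_{\boldsymbol{\phi}^\star}(t))\le \mathcal{L}(\gamma(t))+\tfrac{\beta}{2}\|\Phi_{\boldsymbol{\phi}^\star}(t)-\gamma(t)\|^2$ (the first-order term is simply absent), inserts the same interpolation bound used in (iii), and integrates $\tfrac{\beta\kappa^2}{8}\int_0^1 t^2(1-t)^2\,dt=\beta\kappa^2/240$. Your optimality-plus-auxiliary-$\boldsymbol{\phi}^\dagger$ route is more scrupulous about that linear term, but the proposed mechanism does not deliver: a weighted $L^2$ best fit of $\Phi_{\boldsymbol{\phi}}$ to $\gamma$ yields the first-order condition $\int 2t(1-t)\,w(t)\bigl(\Phi_{\boldsymbol{\phi}^\dagger}(t)-\gamma(t)\bigr)\,dt=0$, an orthogonality of the \emph{residual}, not of $\nabla\mathcal{L}(\gamma(t))$, so it does not annihilate $\int\langle\nabla\mathcal{L}(\gamma(t)),\Phi_{\boldsymbol{\phi}^\dagger}(t)-\gamma(t)\rangle\,dt$; in particular the $\boldsymbol{\phi}$-independent piece $\int\langle\nabla\mathcal{L}(\gamma),\Phi_{\mathrm{lin}}-\gamma\rangle\,dt$ survives no matter how $\boldsymbol{\phi}^\dagger$ is chosen. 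And even if the cross-term did vanish, the quadratic remainder would be $\tfrac{\beta}{2}\int\|\Phi_{\boldsymbol{\phi}^\dagger}-\gamma\|^2\,dt$, controlled by $\|\boldsymbol{\theta}_0-2\boldsymbol{\phi}^\dagger+\boldsymbol{\theta}_T\|$ rather than by $\kappa=2\|\boldsymbol{\theta}_0-2\boldsymbol{\phi}^\star+\boldsymbol{\theta}_T\|$; you give no step linking the two, so the stated constant $\beta\kappa^2/240$ is not yet established by your route.
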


\begin{proof}
    Section~\ref{sec:thm1-proof} of the supplementary material provides the complete proof.
\end{proof}

These guarantees establish Bézier surrogates as smooth, low-loss, and functionally faithful replacements for noisy SGD trajectories. Crucially, their reduced curvature and noise-free geometry make them more stable and efficient to use within trajectory-based optimisation. Building on this foundation, we now propose a complete condensation framework that leverages these surrogates to guide synthetic datasets in place of raw SGD trajectories.

\subsection{Dataset Condensation Using Surrogates}
The proposed DC framework, termed Bézier Trajectory Matching (BTM), begins with a randomly initialised synthetic dataset that is iteratively optimised using Bézier surrogates. It assumes access to a collection of $K$ optimised Bézier surrogates ${\Phi_{\boldsymbol{\phi}^\star}^{(k)}}_{k=1}^K$ that approximate SGD trajectories on the real dataset and a synthetic dataset $\tilde{\mathcal{D}} = \{(\boldsymbol{x}_i, \boldsymbol{y}_i)\}_{i=1}^{|\tilde{\mathcal{D}}|}$, where we stack the synthetic inputs into a matrix $\tilde{\mathbf{X}} \in \mathbb{R}^{{|\tilde{\mathcal{D}}|} \times d}$  and the associated labels into a tensor $\tilde{\mathbf{Y}} \in \mathbb{R}^{{|\tilde{\mathcal{D}}|} \times C}$.

At each iteration, a surrogate $\Phi_{\boldsymbol{\phi}^\star}^{(k)}$ is sampled. 
Two interpolation parameters $t_{\text{start}}, t_{\text{end}} \sim \mathcal{U}(0,1)$ 
are drawn such that $t_{\text{start}} < t_{\text{end}}$. 
These define the segment of the surrogate path used for matching: $\boldsymbol{\theta}_{\text{start}} = \Phi_{\boldsymbol{\phi}^\star}^{(k)}(t_{\text{start}})$ and  $\boldsymbol{\theta}_{\text{target}} = \Phi_{\boldsymbol{\phi}^\star}^{(k)}(t_{\text{end}})$. A student model is initialised at $\tilde{\boldsymbol{\theta}}_0 = \boldsymbol{\theta}_{\text{start}}$, and is trained for $N$ gradient steps on mini-batches 
$B_i = \{(\boldsymbol{x}_j, \boldsymbol{y}_j)\}_{j=1}^b$ 
of size $b$ sampled from the synthetic dataset $\tilde{\mathcal{D}}$ as: 
\begin{equation}
\tilde{\boldsymbol{\theta}}_{i+1} =
\tilde{\boldsymbol{\theta}}_i - \eta_s 
\nabla_{\tilde{\boldsymbol{\theta}}_i} \!\left[
  \frac{1}{b} \sum_{\substack{(\boldsymbol{x},\boldsymbol{y}) \\ \in B_i}}
  \ell(f_{\tilde{\boldsymbol{\theta}}_i}(\boldsymbol{x}), \boldsymbol{y})
\right],
\end{equation}

where $\ell(\cdot,\cdot)$ is the supervised loss and $\eta_s$ is the student learning rate. 
After $N$ steps, the student reaches $\tilde{\boldsymbol{\theta}}_N$.  

To align the student’s progression with the Bézier surrogate, we measure the discrepancy 
using a normalised matching loss:
\begin{equation}
\mathcal{L}_{\text{BTM}} = 
\frac{\|\tilde{\boldsymbol{\theta}}_N - \boldsymbol{\theta}_{\text{target}}\|_2^2}
     {\|\boldsymbol{\theta}_{\text{start}} - \boldsymbol{\theta}_{\text{target}}\|_2^2}.
\end{equation}

The synthetic data $\tilde{\mathbf{X}}$ is updated to minimise $\mathcal{L}_{\text{BTM}}$. Since this loss depends on the synthetic data only indirectly through the unrolled student updates, exact differentiation would 
require second-order derivatives through the full trajectory. To avoid this, we 
adopt a first-order approximation and use the gradient of the BTM loss with respect 
to the final student parameters as the update signal:
\begin{equation}
\boldsymbol{g}_L = 
\nabla_{\tilde{\boldsymbol{\theta}}_N} \mathcal{L}_{\text{BTM}}
= \frac{2(\tilde{\boldsymbol{\theta}}_N - \boldsymbol{\theta}_{\text{target}})}
       {\|\boldsymbol{\theta}_{\text{start}} - \boldsymbol{\theta}_{\text{target}}\|_2^2}.
\end{equation}

The corresponding approximate meta-gradient with respect to the synthetic inputs is
\begin{align}
    \nabla_{\tilde{\mathbf{X}}}\mathcal{L}_{\text{BTM}} \;\approx\;& \\[0.5ex]
    &\hspace{-3em} -\eta_s \sum_{i=0}^{N-1} \frac{1}{|B_i|}
       \sum_{\substack{(\boldsymbol{x},\boldsymbol{y}) \\ \in B_i}}
       \nabla_{\tilde{\mathbf{X}}}\,\Big\langle
       \nabla_{\tilde{\boldsymbol{\theta}}_i}
       \ell\!\big(f_{\tilde{\boldsymbol{\theta}}_i}(\boldsymbol{x}), \boldsymbol{y}\big),\,
       \boldsymbol{g}_L \Big\rangle.
\end{align}

Finally, the synthetic dataset is updated by gradient descent:
\begin{equation}
\tilde{\mathbf{X}} \;\leftarrow\; 
\tilde{\mathbf{X}} - \eta_x\,\nabla_{\tilde{\mathbf{X}}}\mathcal{L}_{\text{BTM}}.
\end{equation}
Note that we can also train labels $\tilde{\mathbf{Y}}$ in the same manner. The complete dataset condensation process is depicted as Algorithm \ref{alg:condensation} in Section \ref{sec:btm-algo} of the supplementary material.

\begin{table*}[t]
\centering
\caption{Performance on CURIAL datasets across different \emph{ipc} levels. Best results at each \emph{ipc} are highlighted in \textcolor{blue}{blue} (ours) and \textcolor{red}{red} (baseline).}

\label{tab:curial_results}

% OUH
\textbf{(a) Oxford University Hospitals (OUH)}\\[0.5ex]
\begin{sc}
\resizebox{0.85\linewidth}{!}{
    \begin{tabular}{lccccBcccc}
    \toprule
    & \multicolumn{4}{cB}{\textbf{AUROC}} & \multicolumn{4}{c}{\textbf{AUPRC}} \\
    \cmidrule(lr{0.5em}){2-5} \cmidrule(lr{0.5em}){6-9}
    % \cline{2-9}
    \textbf{Method} & \textbf{50} & \textbf{100} & \textbf{200} & \textbf{500} & 
    \textbf{50} & \textbf{100} & \textbf{200} & \textbf{500} \\
    \midrule
    Random     & $0.835_{\pm 0.021}$ & $0.855_{\pm 0.007}$ & $0.869_{\pm 0.006}$ & $0.888_{\pm 0.004}$ 
               & $0.158_{\pm 0.036}$ & $0.176_{\pm 0.029}$ & $0.219_{\pm 0.024}$ & $0.276_{\pm 0.031}$ \\
    M3D        & $0.840_{\pm 0.004}$ & $0.862_{\pm 0.003}$ & $0.872_{\pm 0.003}$ & \bestOther{$0.893_{\pm 0.002}$} 
               & $0.162_{\pm 0.015}$ & $0.190_{\pm 0.010}$ & $0.266_{\pm 0.010}$ & $0.290_{\pm 0.012}$ \\
    MTT        & $0.824_{\pm 0.016}$ & $0.849_{\pm 0.011}$ & $0.855_{\pm 0.008}$ & $0.870_{\pm 0.005}$ 
               & $0.356_{\pm 0.019}$ & $0.381_{\pm 0.019}$ & $0.405_{\pm 0.012}$ & $0.407_{\pm 0.007}$ \\
    TESLA      & $0.839_{\pm 0.007}$ & $0.875_{\pm 0.002}$ & $0.874_{\pm 0.003}$ & $0.880_{\pm 0.002}$ 
               & $0.169_{\pm 0.012}$ & $0.205_{\pm 0.009}$ & $0.202_{\pm 0.010}$ & $0.217_{\pm 0.011}$ \\
    FTD        & $0.831_{\pm 0.014}$ & $0.847_{\pm 0.006}$ & $0.852_{\pm 0.005}$ & $0.860_{\pm 0.006}$ 
               & $0.321_{\pm 0.031}$ & $0.382_{\pm 0.012}$ & $0.400_{\pm 0.009}$ & $0.400_{\pm 0.023}$ \\
    DATM       & $0.829_{\pm 0.010}$ & $0.844_{\pm 0.007}$ & $0.851_{\pm 0.009}$ & $0.872_{\pm 0.009}$ 
               & $0.338_{\pm 0.020}$ & $0.394_{\pm 0.009}$ & $0.414_{\pm 0.009}$ & $0.409_{\pm 0.004}$ \\
    \rowcolor{btmgray}
    BTM (Ours) & \bestOurs{$0.854_{\pm 0.012}$} & \bestOurs{$0.863_{\pm 0.009}$} & \bestOurs{$0.876_{\pm 0.00}$} & $0.888_{\pm 0.003}$ 
           & \bestOurs{$0.394_{\pm 0.014}$} & \bestOurs{$0.396_{\pm 0.010}$} & \bestOurs{$0.427_{\pm 0.006}$} & \bestOurs{$0.436_{\pm 0.003}$} \\
    \midrule
    \emph{\textbf{Full Dataset}} & \multicolumn{4}{cB}{$\mathbf{0.901_{\pm 0.001}}$} & \multicolumn{4}{c}{$\mathbf{0.445_{\pm 0.004}}$} \\
    \bottomrule
    \end{tabular}
    }
\end{sc}
\par\vspace{3ex}

% PUH
\textbf{(b) Portsmouth University Hospitals (PUH)}\\[0.5ex]
\begin{sc}
\resizebox{0.85\linewidth}{!}{
    \begin{tabular}{lccccBcccc}
    \toprule
    & \multicolumn{4}{cB}{\textbf{AUROC}} & \multicolumn{4}{c}{\textbf{AUPRC}} \\
    \cmidrule(lr{0.5em}){2-5} \cmidrule(lr{0.5em}){6-9}
    % \cline\cline{2-9}
    \textbf{Method} & \textbf{50} & \textbf{100} & \textbf{200} & \textbf{500} & \textbf{50} & \textbf{100} & \textbf{200} & \textbf{500} \\
    \midrule
    Random     & $0.858_{\pm 0.006}$ & $0.867_{\pm 0.004}$ & $0.879_{\pm 0.004}$ & $0.893_{\pm 0.004}$
               & $0.322_{\pm 0.025}$ & $0.347_{\pm 0.011}$ & $0.410_{\pm 0.029}$ & $0.473_{\pm 0.018}$ \\
    M3D        & $0.863_{\pm 0.004}$ & $0.883_{\pm 0.003}$ & $0.888_{\pm 0.002}$ & $0.900_{\pm 0.001}$
               & $0.348_{\pm 0.013}$ & $0.405_{\pm 0.011}$ & $0.416_{\pm 0.011}$ & $0.527_{\pm 0.011}$ \\
    MTT        & $0.845_{\pm 0.009}$ & $0.871_{\pm 0.005}$ & $0.887_{\pm 0.005}$ & $0.895_{\pm 0.004}$
               & $0.495_{\pm 0.013}$ & $0.517_{\pm 0.012}$ & $0.529_{\pm 0.013}$ & $0.544_{\pm 0.018}$ \\
    TESLA      & $0.861_{\pm 0.006}$ & $0.861_{\pm 0.006}$ & $0.885_{\pm 0.004}$ & $0.893_{\pm 0.002}$
               & $0.412_{\pm 0.019}$ & $0.385_{\pm 0.016}$ & $0.472_{\pm 0.009}$ & $0.483_{\pm 0.009}$ \\
    FTD        & $0.845_{\pm 0.007}$ & $0.870_{\pm 0.007}$ & $0.885_{\pm 0.006}$ & $0.899_{\pm 0.003}$
               & $0.499_{\pm 0.013}$ & $0.536_{\pm 0.010}$ & $0.550_{\pm 0.010}$ & $0.584_{\pm 0.007}$ \\
    DATM       & $0.871_{\pm 0.005}$ & $0.876_{\pm 0.004}$ & $0.887_{\pm 0.004}$ & $0.889_{\pm 0.003}$
               & $0.540_{\pm 0.007}$ & $0.555_{\pm 0.008}$ & $0.555_{\pm 0.008}$ & $0.579_{\pm 0.005}$ \\
    \rowcolor{btmgray}
    BTM (Ours) & \bestOurs{$0.881_{\pm 0.007}$} & \bestOurs{$0.895_{\pm 0.007}$} & \bestOurs{$0.902_{\pm 0.003}$} & \bestOurs{$0.905_{\pm 0.002}$}
           & \bestOurs{$0.564_{\pm 0.011}$} & \bestOurs{$0.575_{\pm 0.008}$} & \bestOurs{$0.593_{\pm 0.009}$} & \bestOurs{$0.603_{\pm 0.007}$} \\
    \midrule
    \emph{\textbf{Full Dataset}} & \multicolumn{4}{cB}{$\mathbf{0.906_{\pm 0.002}}$} & \multicolumn{4}{c}{$\mathbf{0.610_{\pm 0.004}}$} \\
    \bottomrule
    \end{tabular}
    }
\end{sc}
\par\vspace{3ex}

% UHB
\textbf{(c) University Hospitals Birmingham (UHB)}\\[0.5ex]
\begin{sc}
\resizebox{0.85\linewidth}{!}{
    \begin{tabular}{lccccBcccc}
    \toprule
    & \multicolumn{4}{cB}{\textbf{AUROC}} & \multicolumn{4}{c}{\textbf{AUPRC}} \\
    \cmidrule(lr{0.5em}){2-5} \cmidrule(lr{0.5em}){6-9}
    % \cline\cline{2-9}
    \textbf{Method} & \textbf{50} & \textbf{100} & \textbf{200} & \textbf{500} & \textbf{50} & \textbf{100} & \textbf{200} & \textbf{500} \\
    \midrule  
    Random      & $0.847_{\pm 0.015}$ & \bestOther{$0.856_{\pm 0.010}$} & $0.876_{\pm 0.007}$ & $0.891_{\pm 0.005}$
                & $0.089_{\pm 0.024}$ & $0.108_{\pm 0.018}$ & $0.126_{\pm 0.019}$ & $0.153_{\pm 0.015}$ \\
    M3D         & \bestOther{$0.863_{\pm 0.005}$} & $0.853_{\pm 0.003}$ & $0.872_{\pm 0.004}$ & \bestOther{$0.891_{\pm 0.003}$}
                & $0.107_{\pm 0.005}$ & $0.070_{\pm 0.006}$ & $0.107_{\pm 0.008}$ & $0.153_{\pm 0.006}$ \\
    MTT         & $0.802_{\pm 0.008}$ & $0.847_{\pm 0.013}$ & $0.871_{\pm 0.013}$ & $0.884_{\pm 0.007}$
                & $0.092_{\pm 0.065}$ & $0.228_{\pm 0.021}$ & $0.242_{\pm 0.016}$ & $0.234_{\pm 0.017}$ \\
    TESLA       & $0.820_{\pm 0.001}$ & $0.850_{\pm 0.004}$ & $0.859_{\pm 0.004}$ & $0.873_{\pm 0.006}$
                & $0.099_{\pm 0.001}$ & $0.079_{\pm 0.006}$ & $0.125_{\pm 0.008}$ & $0.131_{\pm 0.014}$ \\
    FTD         & $0.830_{\pm 0.005}$ & $0.839_{\pm 0.010}$ & $0.847_{\pm 0.010}$ & $0.872_{\pm 0.010}$
                & $0.122_{\pm 0.056}$ & $0.218_{\pm 0.013}$ & $0.216_{\pm 0.018}$ & $0.233_{\pm 0.012}$ \\
    DATM        & $0.828_{\pm 0.028}$ & $0.832_{\pm 0.022}$ & $0.843_{\pm 0.015}$ & $0.870_{\pm 0.005}$
                & $0.152_{\pm 0.019}$ & $0.176_{\pm 0.022}$ & $0.221_{\pm 0.017}$ & $0.237_{\pm 0.017}$ \\
    \rowcolor{btmgray}
    BTM (Ours) & $0.842_{\pm 0.011}$ & $0.836_{\pm 0.023}$ & \bestOurs{$0.884_{\pm 0.007}$} & $0.891_{\pm 0.006}$
            & \bestOurs{$0.258_{\pm 0.006}$} & \bestOurs{$0.246_{\pm 0.013}$} & \bestOurs{$0.253_{\pm 0.011}$} & \bestOurs{$0.272_{\pm 0.006}$} \\
    \midrule
    \emph{\textbf{Full Dataset}} & \multicolumn{4}{cB}{$\mathbf{0.895_{\pm 0.005}}$} & \multicolumn{4}{c}{$\mathbf{0.284_{\pm 0.005}}$} \\
    \bottomrule
    \end{tabular}
    }
\end{sc}

\end{table*}

\section{EXPERIMENTAL SETUP}
\label{sec:exp}
\textsc{DATASETS}: The proposed method is evaluated on three de-identified clinical datasets spanning tabular and time-series modalities, each framed as binary classification:
\begin{itemize}[noitemsep, topsep=0pt]
    % \item \textsc{eICU} \citep{pollard2018eicu}: 49,305 multi-centre ICU stays with 402-dimensional feature vectors (admission, ICU, demographic variables). In-hospital mortality prevalence: 9.1\%.
    % \item \textsc{CURIAL}: Private COVID-19 triage dataset from three NHS Trusts with 27 blood tests and vital signs. COVID-19 prevalence: Oxford University Hospitals—1.7\%, University Hospitals Birmingham—0.8\%, Portsmouth University Hospitals—5.3\%.

    \item \textsc{CURIAL} \citep{soltan2024scalable}: Anonymised EHR data from emergency departments at three NHS Trusts in the UK—Oxford (161,955 examples), Portsmouth (38,717), and Birmingham (95,236)—comprising demographics, blood tests, and vital signs, used for COVID-19 diagnosis.
    
    \item \textsc{eICU} \citep{pollard2018eicu}: 49,305 multi-centre ICU stays represented by 402 tabular features (admission, ICU, and demographic variables), used for in-hospital mortality prediction.  

    % \item \textsc{MIMIC-III} \citep{johnson2016mimic}: 21,156 ICU stays with 48-hour sequences across 60 physiological features. In-hospital mortality prevalence: 13.5\% (validation), 11.5\% (test).
    \item \textsc{MIMIC-III} \citep{johnson2016mimic}: This large critical care dataset is processed for in-hospital mortality prediction. We use 21,156 ICU stays represented as multivariate time-series with 48 time-steps and 60 physiological features.
\end{itemize}

Full dataset descriptions and preprocessing details are provided in Section \ref{sec:dataset_details} of the supplementary material.

\textsc{BASELINES}: The proposed approach is compared against state-of-the-art dataset condensation methods spanning trajectory matching (MTT \citep{cazenavette2022dataset}, TESLA \citep{liu2023tesla}, FTD \citep{du2023ftd}, DATM \citep{guo2024datm}) and distribution matching (M3D \citep{zhang2024m3d}). Random selection and full dataset training serve as lower and upper bounds respectively.

\textsc{EXPERIMENTAL PROTOCOL}: Synthetic datasets are condensed at 50, 100, 200, and 500 instances per class (\emph{ipc}). All methods are adapted to our clinical data modalities following their original implementations, with unoptimised labels throughout—hard labels for most methods, soft labels for TESLA. Following standard evaluation protocol, randomly initialised networks are trained from scratch on the condensed data and evaluated on held-out test sets. Performance is measured using area under the receiver operating characteristic curve (AUROC) and area under the precision-recall curve (AUPRC), with results reported as mean $\pm$ standard deviation over 10 random initialisations.

Evaluation architectures remain fixed across all methods: a shallow multi-layer perceptron (MLP) \citep{rumelhart1986learning} for tabular data and a temporal convolutional network (TCN) \citep{bai2018empirical} for time-series. Consistent with prior work, the same architectures are employed for both condensation and evaluation (apart from ablation studies). Synthetic inputs are initialised from real examples unless otherwise specified. Complete implementation details, including parameter settings, are provided in Section~\ref{sec:implementation} of the supplementary material.

%%%%%%%%%%%%%%%%%%%%%%%%%%%%%%%%%%%%%%%%%%%%%%%%%%%%%%%%%%%%
\section{RESULTS \& DISCUSSIONS}
\label{sec:results}

\begin{figure}[t]
    \centering
    \captionsetup{skip=1pt}
    \begin{subfigure}[t]{1.0\linewidth}
        \centering
        \includegraphics[width=\linewidth]{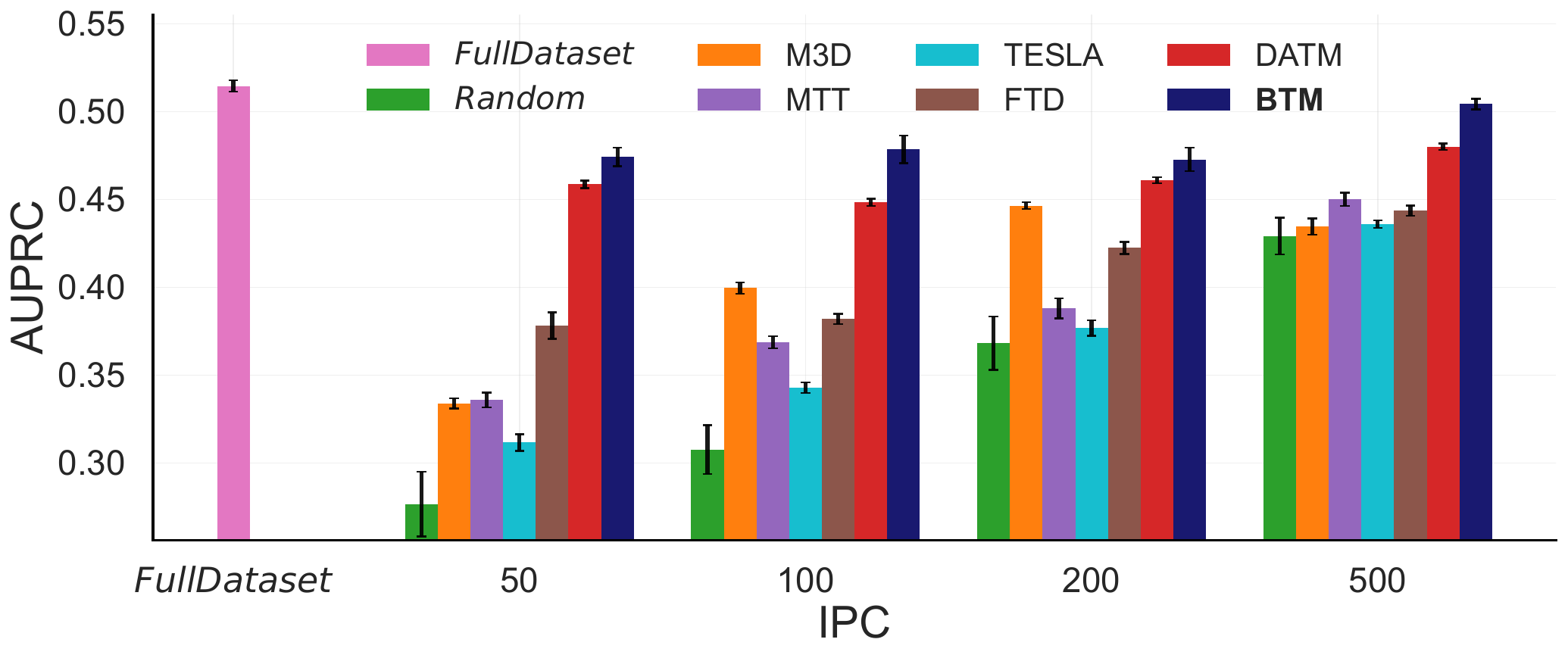}
        \subcaption{eICU}
    \end{subfigure}
    \hfill
    \begin{subfigure}[t]{1.0\linewidth}
        \centering
        \includegraphics[width=\linewidth]{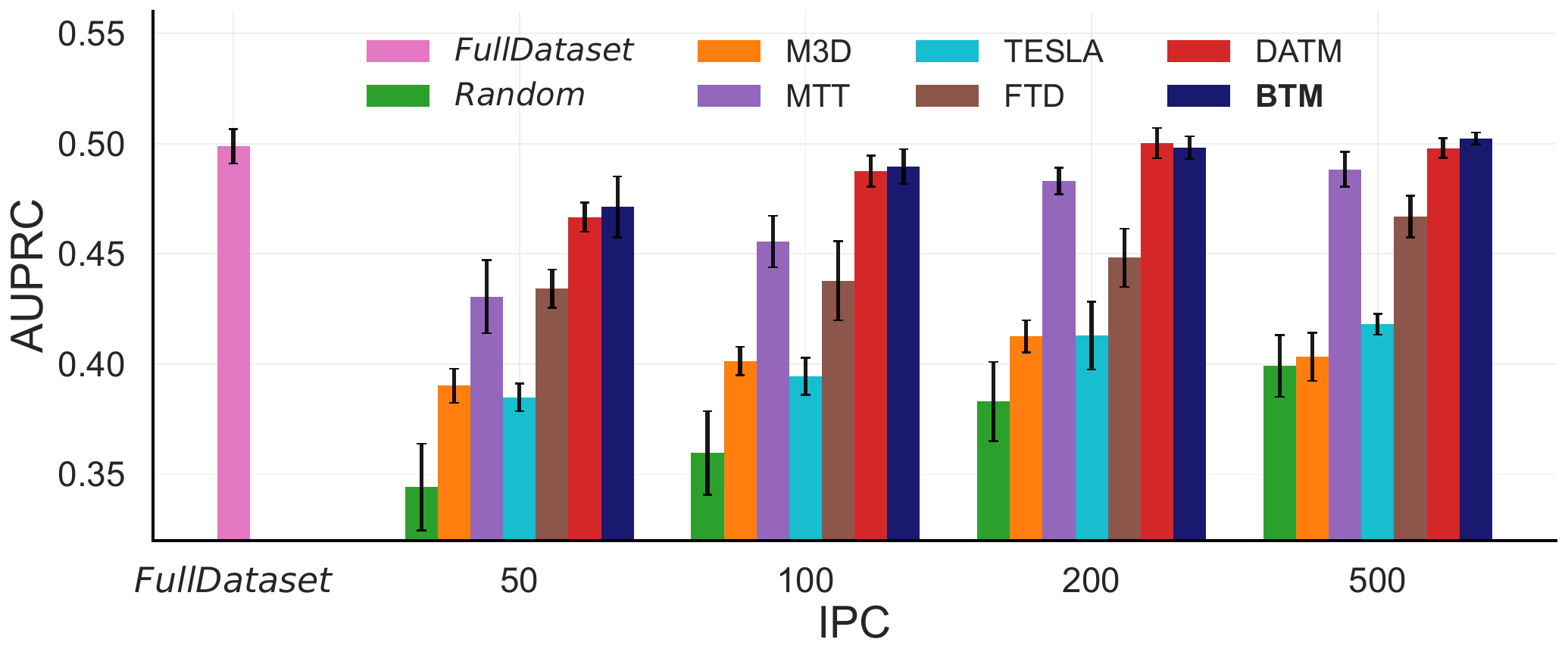}
        \subcaption{MIMIC-III}
    \end{subfigure}
    \caption{
  Performance comparison of different methods across \emph{ipc} levels on eICU and MIMIC-III datasets.
    }
    \label{fig:ihm_results}
\end{figure}

\subsection{Performance of Condensed Catasets}
% \textsc{CURIAL Datasets:}
% Across the three CURIAL sites (Table~\ref{tab:curial_results}), performance generally improves with more \emph{ipc}, with the largest and most reliable gains accruing to AUPRC rather than AUROC as \emph{ipc} increases. This pattern reflects that AUROC can be deceptive for binary clinical tasks under high class imbalance: for example, at UHB (0.8\% prevalence) and OUH (1.7\% prevalence), M3D attains competitive AUROC scores yet substantially lower AUPRC than BTM, indicating poor precision at clinically relevant recall levels despite seemingly strong discrimination. TESLA performs poorly across CURIAL—especially at UHB—consistent with prior findings that soft-label assignment is most beneficial in multi-class problems where class labels carry complementary structure; moreover, TESLA assigns soft labels using intermediate checkpoint models for computational efficiency, which introduces additional instability and degrades ranking performance in these imbalanced, binary settings.

\textsc{CURIAL Datasets:}
% Bézier-mode Trajectory Matching (BTM) consistently outperforms baseline methods across all three CURIAL sites (Table~\ref{tab:curial_results}), with performance improving as \emph{ipc} increases and larger gains in AUPRC than AUROC. This reflects that AUROC can be deceptive for imbalanced binary tasks: at UHB (0.8\% prevalence) and OUH (1.7\% prevalence), M3D attains competitive AUROC yet substantially lower AUPRC than BTM, indicating poor precision at clinically relevant recall levels. TESLA performs poorly across CURIAL—especially at UHB—consistent with soft-label assignment being most beneficial in multi-class settings; moreover, TESLA's use of intermediate checkpoints for soft label assignment introduces instability that degrades ranking performance in imbalanced binary tasks.
BTM achieves consistently strong results across all three CURIAL sites (Table~\ref{tab:curial_results}), with performance improving as \emph{ipc} increases. Gains are particularly clear in AUPRC, where BTM outperforms all baselines at every site, while AUROC improvements are competitive and generally comparable across methods. At Oxford (OUH) and Portsmouth (PUH), BTM provides the best performance across both metrics, and at Birmingham (UHB) it delivers the largest AUPRC gains despite similar AUROC to strong baselines. In imbalanced prediction tasks, AUPRC provides a more informative evaluation than AUROC, as it directly reflects precision at clinically relevant recall levels. This is particularly important in CURIAL, where outcome prevalence is low—0.8\% at UHB, 1.7\% at OUH, and 5.3\% at PUH. Under these conditions, baselines such as M3D can achieve competitive AUROC yet still perform poorly in detecting positives, as reflected in their lower AUPRC. By contrast, BTM not only delivers consistent gains in AUPRC across all sites but also improves or maintains AUROC relative to strong baselines, highlighting its robustness for imbalanced clinical prediction tasks.

\textsc{eICU and MIMIC-III Datasets:}
On in-hospital mortality tasks (Figure~\ref{fig:ihm_results}), BTM achieves strong AUPRC gains on eICU and is effectively lossless on MIMIC-III at 500 \emph{ipc}. FTD performs poorly relative to MTT, highlighting that enforcing trajectory flatness is less effective than using smoother surrogates. Unlike DATM, BTM achieves comparable performance without requiring \emph{ipc}-specific difficulty alignment, offering a simpler alternative. Complete results including AUROC are in Section \ref{sec:add_res} of the supplementary material.

% \textsc{Synthetic Data Compression Ratio:}  
% On CURIAL, condensing each site to 500 \emph{ipc} reduces the synthetic datasets to around 1\% of their original size while still achieving performance close to the full dataset. On eICU, the synthetic dataset is only 2\% of the original (1,000 vs.\ 49,305) yet maintains near--full-dataset performance. On MIMIC-III, comparable performance is achieved already at 200 \emph{ipc}, where the synthetic dataset is again about 2\% of the original (400 vs.\ 21,156). Across all three settings, BTM demonstrates that extreme compression can be achieved without sacrificing noticeable predictive performance.

\textsc{Synthetic Data Compression Ratio:}  
On CURIAL, condensing to 500 \emph{ipc} reduces datasets to between 0.6\% (OUH) and 2.6\% (PUH) of their original size while achieving performance close to the full dataset. On eICU, the synthetic dataset is only 2\% of the original (1,000 vs.\ 49,305) yet maintains near--full--dataset performance. On MIMIC-III, comparable performance is achieved already at 200 \emph{ipc}, where the synthetic dataset is again about 2\% of the original (400 vs.\ 21,156). Across all three settings, BTM demonstrates that extreme compression can be achieved without sacrificing noticeable predictive performance.

\begin{table}[t]
\centering
\caption{Cross-architecture generalisation at 500 \emph{ipc}. Synthetic data condensed with one architecture (grey) is evaluated on unseen architectures: (a) eICU with MLP depth/width variants, (b) MIMIC-III with TCN/LSTM transfer. See Section~\ref{subsec:architectures} of the supplementary material for architecture details.}
\label{tab:crossarch}

\begin{minipage}{0.48\linewidth}
\centering
\textbf{(a) eICU}\\[0.5ex]
\begin{small}
\begin{sc}
\resizebox{\linewidth}{!}{
    \begin{tabular}{lcc}
    \toprule
    \textbf{Network} & \textbf{AUROC} & \textbf{AUPRC}\\
    \midrule
    MLP-1      & $0.871_{\pm0.001}$ & $0.504_{\pm0.001}$ \\
    MLP-2      & $0.873_{\pm0.002}$ & $0.506_{\pm0.002}$ \\
    MLP-3      & $0.871_{\pm0.002}$ & $0.501_{\pm0.001}$ \\
    MLP-4      & $0.867_{\pm0.002}$ & $0.488_{\pm0.001}$ \\
    \rowcolor{btmgray}
    \emph{MLP} & $\mathbf{0.874}_{\pm0.002}$ & $\mathbf{0.504}_{\pm0.001}$ \\
    \bottomrule
    \end{tabular}
}
\end{sc}
\end{small}
\end{minipage}\hfill
\begin{minipage}{0.48\linewidth}
\centering
\textbf{(b) MIMIC-III}\\[0.5ex]
\begin{small}
\begin{sc}
\resizebox{\linewidth}{!}{
    \begin{tabular}{lcc}
    \toprule
    \textbf{Network} & \textbf{AUROC} & \textbf{AUPRC}\\
    \midrule
    LSTM-1     & $0.810_{\pm0.003}$ & $0.440_{\pm0.006}$ \\
    LSTM-2     & $0.819_{\pm0.004}$ & $0.450_{\pm0.008}$ \\
    TCN-1      & $0.835_{\pm0.002}$ & $0.491_{\pm0.005}$ \\
    TCN-2      & $0.831_{\pm0.002}$ & $0.487_{\pm0.003}$ \\
    \rowcolor{btmgray}
    \emph{TCN} & $\mathbf{0.840}_{\pm0.001}$ & $\mathbf{0.502}_{\pm0.003}$ \\
    \bottomrule
    \end{tabular}
}
\end{sc}
\end{small}
\end{minipage}

\end{table}

\subsection{Cross-Architecture Generalisation} 
A key question for DC is whether synthetic data condensed with one model architecture can generalise to others, avoiding the need for repeated re-condensation. Table~\ref{tab:crossarch} evaluates this transferability at 500 \emph{ipc}. On eICU, synthetic data condensed with a 1-layer MLP transfers robustly across MLP variants of different depths and widths, with minimal variation in AUROC and AUPRC. On MIMIC-III, synthetic data condensed with a TCN generalises well to other TCNs and transfers competitively to LSTMs despite their fundamentally different temporal processing mechanisms, showing only modest degradation. Performance remains stable both within and across architecture families, demonstrating that BTM produces architecture-agnostic synthetic data suitable for deployment without re-condensation.

\subsection{Storage Efficiency}
Figure~\ref{fig:storage_cost} highlights the storage efficiency of BTM across all clinical datasets. Reduction factors depend on trajectory length: storing 100 checkpoints per trajectory yields approximately 33$\times$ savings for CURIAL and eICU, while 60 checkpoints yield about 20$\times$ savings for MIMIC-III. These reductions substantially lower memory demands, allowing more expert trajectories to be retained and enabling DC in resource-constrained clinical settings where trajectory storage would otherwise be prohibitive.

\begin{figure}
\centering
    \includegraphics[width=.9\linewidth]{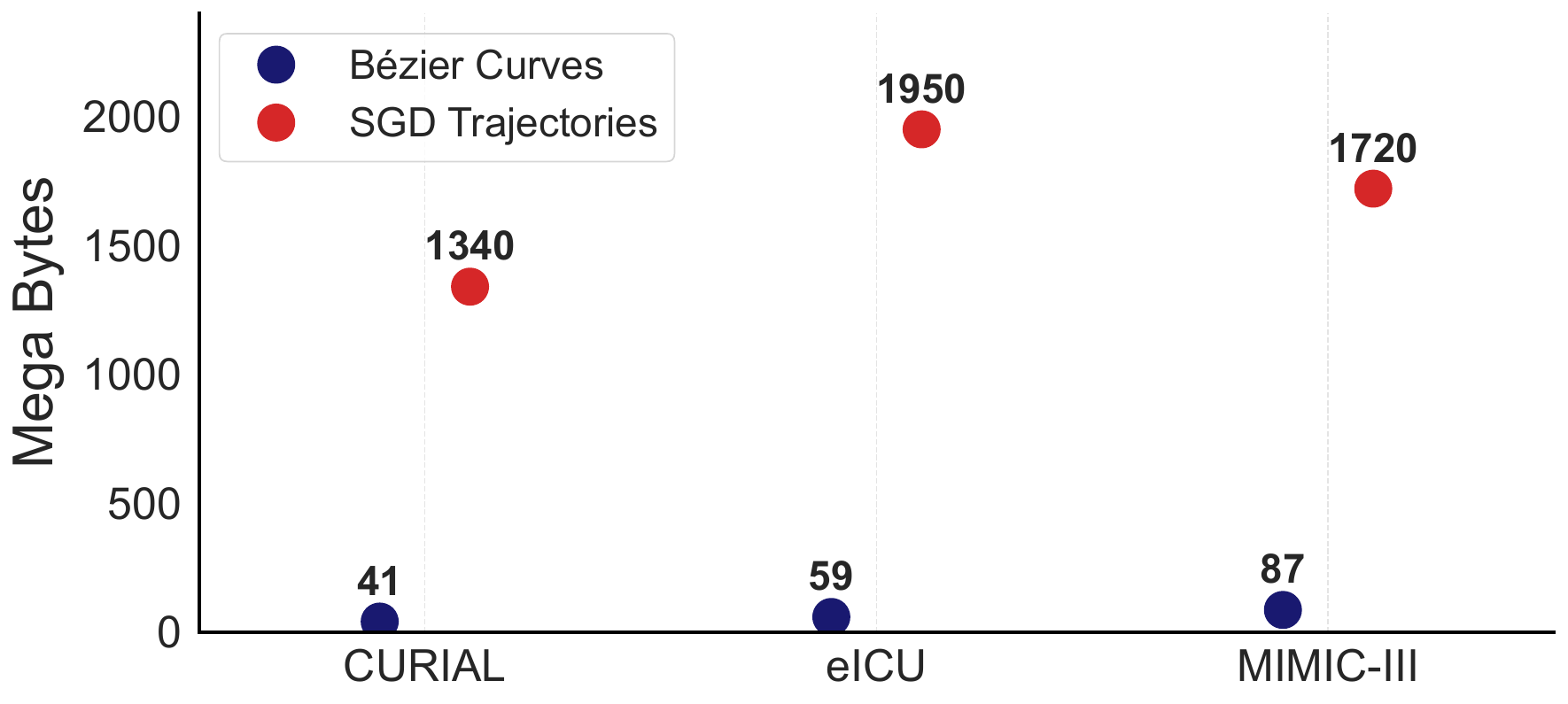}
    \caption{Trajectory storage requirements across clinical datasets. SGD trajectories require 33$\times$ (eICU, CURIAL) and 20$\times$ (MIMIC-III) more storage than Bézier surrogates, which store only initial, final, and control checkpoints.}
\label{fig:storage_cost}
\end{figure}

\subsection{Ablation Studies}

\textsc{Inner Loop Steps:} The inner loop controls the number of gradient updates the student model takes when trained on the synthetic data before matching against the expert trajectory. 

\begin{figure}[H]
\centering
    \includegraphics[width=1\linewidth]{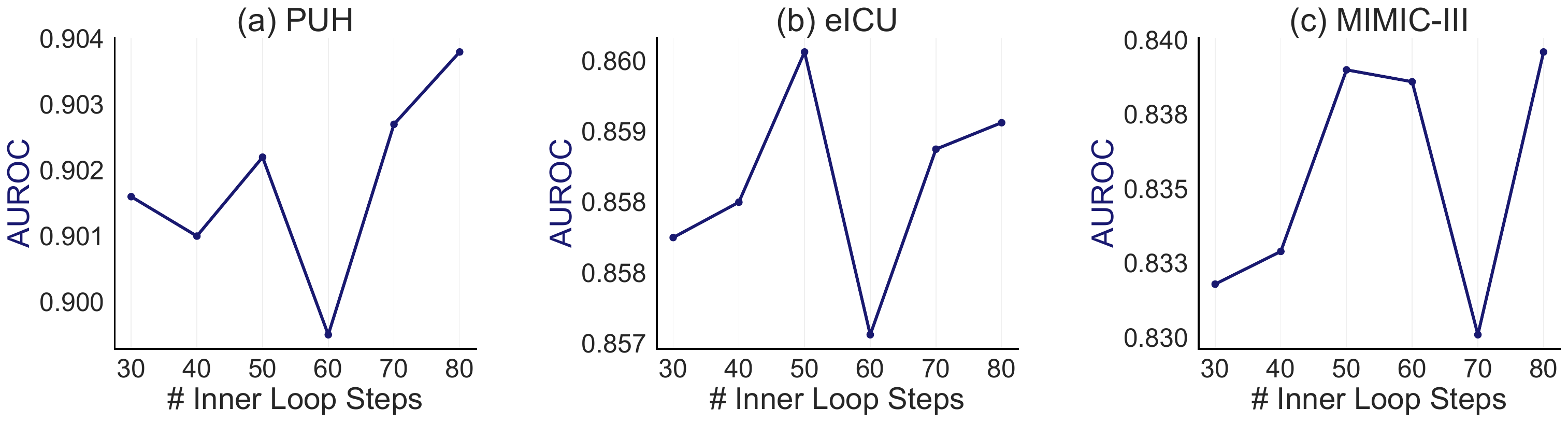}
    \caption{Impact of inner-loop steps $N$ on AUROC performance at 200 \emph{ipc}. BTM achieves strong performance with only 30 steps, reducing computational overhead. Similar trends observed for AUPRC.}
\label{fig:inner_loop}
\end{figure}

In standard TM, this is tied to the expert optimisation epochs M, but with Bézier surrogates the continuous parameterisation $t \in [0,1]$ decouples segment length ($t_{\text{start}}$, $t_{\text{end}}$) from discrete optimisation steps. As a result, the optimal number of steps must be determined empirically. Figure \ref{fig:inner_loop} ablates $N$ at 200 \emph{ipc}. While prior methods typically require $N \ge 40$ \citep{guo2024datm}, BTM sustains strong AUROC and AUPRC even at $N = 30$, demonstrating that Bézier surrogates provide stable supervision, enabling the student to train effectively with few gradient steps.

\textsc{Initialisation:} Table~\ref{tab:init} compares real and random initialisation on eICU. Real initialisation seeds synthetic inputs from real samples; random initialisation samples from class-conditional Gaussians. Performance is comparable, with real initialisation providing modest improvements at 500 \emph{ipc}. Critically, random initialisation maintains competitive performance while eliminating direct dependence on real samples, offering stronger privacy guarantees. Combined with differential privacy \citep{dwork2006calibrating}, this enables secure sharing of condensed clinical datasets to democratise healthcare AI development without compromising patient confidentiality.

\begin{table}
\centering
\caption{Initialisation strategy comparison on eICU. Random initialisation remains competitive while providing stronger privacy guarantees.}
\label{tab:init}
\begin{sc}
\resizebox{1.0\linewidth}{!}{
    \begin{tabular}{ll|cccc}
    \toprule
    & \multicolumn{1}{c}{\multirow{2}{*}{\textbf{Init.}}} & \multicolumn{4}{c}{\textbf{IPC}} \\
    \cmidrule(lr){3-6}
    & \multicolumn{1}{c}{} & \textbf{50} & \textbf{100} & \textbf{200} & \textbf{500} \\
    \midrule
    \multirow{2}{*}{\textbf{AUROC}}
        & Real   & $0.866_{\pm0.001}$ & $0.856_{\pm0.004}$ & $0.858_{\pm0.003}$ & $0.871_{\pm0.001}$ \\
        & Random & $0.852_{\pm0.009}$ & $0.858_{\pm0.004}$ & $0.853_{\pm0.004}$ & $0.862_{\pm0.002}$ \\
    \cmidrule(lr){2-6}
    \multirow{2}{*}{\textbf{AUPRC}}
        & Real   & $0.474_{\pm0.005}$ & $0.479_{\pm0.008}$ & $0.473_{\pm0.007}$ & $0.504_{\pm0.003}$ \\
        & Random & $0.463_{\pm0.015}$ & $0.474_{\pm0.007}$ & $0.475_{\pm0.008}$ & $0.499_{\pm0.004}$ \\
    \bottomrule
    \end{tabular}
}
\end{sc}
\end{table}

%%%%%%%%%%%%%%%%%%%%%%%%%%%%%%%%%%%%%%%%%%%%%%%%%%%%%%%%%%%%
\section{CONCLUSION}
% This work introduces mode connectivity–based trajectory surrogates to overcome key limitations of trajectory matching in dataset condensation. By replacing unstable, storage-intensive SGD trajectories with smooth Bézier paths, we generate condensed clinical datasets that enable effective downstream model development while requiring only three checkpoints per trajectory. Our theoretical analysis shows that optimised Bézier curves preserve the functional properties of SGD trajectories while reducing curvature and eliminating stochastic noise. Empirically, across five clinical datasets, the proposed approach consistently matches or outperforms state-of-the-art condensation methods, with especially strong gains in AUPRC for imbalanced classification tasks.

% The framework achieves near–full-dataset performance with up to 99\% compression and order-of-magnitude reductions in storage. Cross-architecture experiments confirm transferability without re-condensation, and random initialisation enhances privacy guarantees. A current limitation is that we have not incorporated differential privacy into our framework, though this can be readily integrated to provide formal privacy guarantees on the condensed datasets. Overall, this framework advances the practical viability of dataset condensation in healthcare, enabling privacy-preserving model development without sacrificing downstream performance.

This work introduces mode connectivity–based trajectory surrogates to overcome key limitations of trajectory matching in dataset condensation. By replacing unstable, storage-intensive SGD trajectories with smooth Bézier paths, we generate condensed clinical datasets that enable effective model development while requiring only three checkpoints per trajectory. Our theoretical analysis shows optimised Bézier curves preserve SGD functional properties while reducing curvature and eliminating stochastic noise. Empirically, across five clinical datasets, our approach consistently matches or outperforms state-of-the-art methods, with particularly strong AUPRC gains for imbalanced tasks.

The framework achieves near–full-dataset performance with up to 99\% compression and order-of-magnitude storage reductions. Cross-architecture experiments confirm transferability without re-condensation, and random initialisation enhances privacy guarantees. A current limitation is that we have not incorporated differential privacy into our framework, though this can be readily integrated to provide formal privacy guarantees on the condensed datasets Overall, this framework advances the practical viability of dataset condensation in healthcare, enabling privacy-preserving model development without sacrificing performance.

% %%%%%%%%%%%%%%%%%%%%%%%%%%%%%%%%%%%%%%%%%%%%%%%%%%%%%%%%%%%%
% \subsubsection*{Code}
% The implementation of the proposed method is avail-
% able at ...

% %%%%%%%%%%%%%%%%%%%%%%%%%%%%%%%%%%%%%%%%%%%%%%%%%%%%%%%%%%%%
% \subsubsection*{Acknowledgements}
% Pafue C. Nganjimi is supported by the EPSRC Centre for Doctoral Training in Health Data Science (EP/S02428X/1). 

% David A. Clifton is supported by the Pandemic Sciences Institute at the University of Oxford; the National Institute for Health Research (NIHR) Oxford Biomedical Research Center (BRC); an NIHR Research Professorship; a Royal Academy of Engineering Research Chair; the Wellcome Trust; the UKRI; and the InnoHK Hong Kong Center for Center for Cerebrocardiovascular Engineering (COCHE).

%%%%%%%%%%%%%%%%%%%%%%%%%%%%%%%%%%%%%%%%%%%%%%%%%%%%%%%%%%%%
\setlength{\itemindent}{-\leftmargin}
\makeatletter
\renewcommand{\@biblabel}[1]{}
\makeatother

\bibliography{ref}

%%%%%%%%%%%%%%%%%%%%%%%%%%%%%%%%%%%%%%%%%%%%%%%%%%%%%%%%%%%%
\section*{Checklist}

\begin{enumerate}

  \item For all models and algorithms presented, check if you include:
  \begin{enumerate}
    \item A clear description of the mathematical setting, assumptions, algorithm, and/or model. [Yes]
    \item An analysis of the properties and complexity (time, space, sample size) of any algorithm. [Yes]
    \item (Optional) Anonymized source code, with specification of all dependencies, including external libraries. 
  \end{enumerate}

  \item For any theoretical claim, check if you include:
  \begin{enumerate}
    \item Statements of the full set of assumptions of all theoretical results. [Yes]
    \item Complete proofs of all theoretical results. [Yes]
    \item Clear explanations of any assumptions. [Yes]     
  \end{enumerate}

  \item For all figures and tables that present empirical results, check if you include:
  \begin{enumerate}
    \item The code, data, and instructions needed to reproduce the main experimental results (either in the supplemental material or as a URL). [Yes]
    \item All the training details (e.g., data splits, hyperparameters, how they were chosen). [Yes]
    \item A clear definition of the specific measure or statistics and error bars (e.g., with respect to the random seed after running experiments multiple times). [Yes]
    \item A description of the computing infrastructure used. (e.g., type of GPUs, internal cluster, or cloud provider). [Yes]
  \end{enumerate}

  \item If you are using existing assets (e.g., code, data, models) or curating/releasing new assets, check if you include:
  \begin{enumerate}
    \item Citations of the creator If your work uses existing assets. [Yes]
    \item The license information of the assets, if applicable. [Not Applicable]
    \item New assets either in the supplemental material or as a URL, if applicable. [Yes]
    \item Information about consent from data providers/curators. [Not Applicable]
    \item Discussion of sensible content if applicable, e.g., personally identifiable information or offensive content. [Not Applicable]
  \end{enumerate}

  \item If you used crowdsourcing or conducted research with human subjects, check if you include:
  \begin{enumerate}
    \item The full text of instructions given to participants and screenshots. [Not Applicable]
    \item Descriptions of potential participant risks, with links to Institutional Review Board (IRB) approvals if applicable. [Not Applicable]
    \item The estimated hourly wage paid to participants and the total amount spent on participant compensation. [Not Applicable]
  \end{enumerate}

\end{enumerate}

\clearpage
\appendix
\thispagestyle{empty}

%%%%%%%%%%%%%%%%%%%%%%%%%%%%%%%%%%%%%%%%%%%%%%%%%%%%%%%%%%%%%%%%
% Supplementary material: To improve readability, you must use a single-column format for the supplementary material.
\onecolumn
\aistatstitle{Supplementary Material}

% Reset counters
\setcounter{section}{0}
\setcounter{theorem}{0}
\setcounter{equation}{0}
\setcounter{table}{0}
\setcounter{lemma}{0}

%%%%%%%%%%%%%%%%%%%%%%%%%%%%%%%%%%%%%%%%%%%%%%%%%%%%%%%%%%%%
\section{PIPELINE OF PROPOSED METHOD}
\label{sec:btm-algo}

\begin{algorithm}[H]
\caption{Dataset Condensation Using Trajectory Surrogates}
\label{alg:condensation}
\begin{algorithmic}[1]
\REQUIRE Collection of $K$ optimised Bézier surrogates $\{\Phi_{\boldsymbol{\phi}^\star}^{(k)}\}_{k=1}^K$, synthetic dataset $\tilde{\mathcal{D}} = \{(\boldsymbol{x}_i, \boldsymbol{y}_i)\}_{i=1}^{|\tilde{\mathcal{D}}|}$, number of classes $C$, input dimension $d$, student learning rate $\eta_s$, meta learning rate $\eta_x$, number of student steps $N$, batch size $b$, maximum iterations $T_{\max}$
\ENSURE Optimised synthetic dataset $\tilde{\mathcal{D}}^\star$
\STATE Stack synthetic inputs into matrix $\tilde{\mathbf{X}} \in \mathbb{R}^{|\tilde{\mathcal{D}}| \times d}$
\STATE Stack synthetic labels into tensor $\tilde{\mathbf{Y}} \in \mathbb{R}^{|\tilde{\mathcal{D}}| \times C}$
\FOR{$t = 1$ to $T_{\max}$}
    \STATE Sample surrogate index $k \sim \mathcal{U}\{1, \ldots, K\}$
    \STATE Sample $t_{\text{start}}, t_{\text{end}} \sim \mathcal{U}(0,1)$ such that $t_{\text{start}} < t_{\text{end}}$
    \STATE $\boldsymbol{\theta}_{\text{start}} \leftarrow \Phi_{\boldsymbol{\phi}^\star}^{(k)}(t_{\text{start}})$ \COMMENT{Start position on surrogate}
    \STATE $\boldsymbol{\theta}_{\text{target}} \leftarrow \Phi_{\boldsymbol{\phi}^\star}^{(k)}(t_{\text{end}})$ \COMMENT{Target position on surrogate}
    \STATE $\tilde{\boldsymbol{\theta}}_0 \leftarrow \boldsymbol{\theta}_{\text{start}}$ \COMMENT{Initialise student model}
    \FOR{$i = 0$ to $N-1$}
        \STATE Sample mini-batch $B_i = \{(\boldsymbol{x}_j, \boldsymbol{y}_j)\}_{j=1}^b \subset \tilde{\mathcal{D}}$
        \STATE $\tilde{\boldsymbol{\theta}}_{i+1} \leftarrow \tilde{\boldsymbol{\theta}}_i - \eta_s \nabla_{\tilde{\boldsymbol{\theta}}_i} \!\left[\frac{1}{b} \sum_{(\boldsymbol{x},\boldsymbol{y}) \in B_i} \ell(f_{\tilde{\boldsymbol{\theta}}_i}(\boldsymbol{x}), \boldsymbol{y})\right]$
    \ENDFOR
    \STATE $\mathcal{L}_{\text{BTM}} \leftarrow \frac{\|\tilde{\boldsymbol{\theta}}_N - \boldsymbol{\theta}_{\text{target}}\|_2^2}{\|\boldsymbol{\theta}_{\text{start}} - \boldsymbol{\theta}_{\text{target}}\|_2^2}$ \COMMENT{Normalised matching loss}
    \STATE $\boldsymbol{g}_L \leftarrow \nabla_{\tilde{\boldsymbol{\theta}}_N} \mathcal{L}_{\text{BTM}} = \frac{2(\tilde{\boldsymbol{\theta}}_N - \boldsymbol{\theta}_{\text{target}})}{\|\boldsymbol{\theta}_{\text{start}} - \boldsymbol{\theta}_{\text{target}}\|_2^2}$ \COMMENT{Gradient signal}
    \STATE Compute $\nabla_{\tilde{\mathbf{X}}}\mathcal{L}_{\text{BTM}} \approx -\eta_s \sum_{i=0}^{N-1} \frac{1}{|B_i|} \sum_{(\boldsymbol{x},\boldsymbol{y}) \in B_i} \nabla_{\tilde{\mathbf{X}}}\,\Big\langle \nabla_{\tilde{\boldsymbol{\theta}}_i} \ell\!\big(f_{\tilde{\boldsymbol{\theta}}_i}(\boldsymbol{x}), \boldsymbol{y}\big), \boldsymbol{g}_L \Big\rangle$
    \STATE $\tilde{\mathbf{X}} \leftarrow \tilde{\mathbf{X}} - \eta_x\,\nabla_{\tilde{\mathbf{X}}}\mathcal{L}_{\text{BTM}}$ \COMMENT{Update synthetic inputs}
\ENDFOR
\STATE \textbf{return} $\tilde{\mathcal{D}}^\star = \tilde{\mathcal{D}}$
\end{algorithmic}
\end{algorithm}

%%%%%%%%%%%%%%%%%%%%%%%%%%%%%%%%%%%%%%%%%%%%%%%%%%%%%%%%%%%%
\section{PROOF OF THEOREM~\ref{thm:bez-sur}}
\label{sec:thm1-proof}

This section provides the complete proof of Theorem~\ref{thm:bez-sur} presented in Section~\ref{subsec:bez-theory} of the main paper, establishing that optimised Bézier curves serve as effective surrogates for SGD trajectories in dataset condensation. 

\begin{proof}
    \textbf{(i) Average loss along the Bézier path is near-optimal.}  
    Since $\mathcal{L}$ is $\beta$-smooth, we have:
    \begin{equation}
        \mathcal{L}(\Phi_{\boldsymbol{\phi}^\star}(t)) \le \mathcal{L}(\gamma(t)) + \frac{\beta}{2} \|\Phi_{\boldsymbol{\phi}^\star}(t) - \gamma(t)\|^2.
    \end{equation}
    
    From Bézier interpolation theory \citep{pottmann2001}, the deviation between the Bézier curve and the piecewise-linear trajectory at parameter $t$ satisfies:
    \begin{equation}
        \|\Phi_{\boldsymbol{\phi}^\star}(t) - \gamma(t)\| \leq \frac{t(1 - t)}{2} \cdot \sup_u \|\Phi_{\boldsymbol{\phi}^\star}''(u) - \gamma''(u)\|.
    \end{equation}
    
    Since the Bézier curve has constant curvature $\|\Phi_{\boldsymbol{\phi}^\star}''(u)\| = \kappa$ (not dependent on $u$) and the SGD trajectory $\gamma''(u)$ is noisy, we conservatively bound the supremum difference by $\kappa$. Also, since $t(1-t)$ attains its maximum value $\frac{1}{4}$ at $t=\frac{1}{2}$, we have
    \begin{equation}
        0 \;\le\; t(1-t) \;\le\; \frac{1}{4}
        \qquad\Longrightarrow\qquad
        \|\Phi_{\boldsymbol{\phi}^\star}(t)-\gamma(t)\|
        \;\le\;
        \frac{t(1-t)}{2}\,\kappa.
    \end{equation}
    
    \noindent Squaring the deviation gives
    \begin{equation}
        \|\Phi_{\boldsymbol{\phi}^\star}(t)-\gamma(t)\|^{2}
        \;\le\;
        \frac{t^{2}(1-t)^{2}\,\kappa^{2}}{4}.
    \end{equation}
    
    \noindent
    Substituting this bound into the $\beta$-smoothness inequality yields
    \begin{equation}
        \mathcal{L}(\Phi_{\boldsymbol{\phi}^\star}(t))
        \;\le\;
        \mathcal{L}(\gamma(t))
        +\frac{\beta}{2}\,
        \frac{t^{2}(1-t)^{2}\,\kappa^{2}}{4}
        \;=\;
        \mathcal{L}(\gamma(t))
        +\frac{\beta\,\kappa^{2}}{8}\,t^{2}(1-t)^{2}.
    \end{equation}
    
    \noindent%
    Finally, integrate over $t\in[0,1]$:
    \begin{equation}
        \int_{0}^{1}\mathcal{L}(\Phi_{\boldsymbol{\phi}^\star}(t))\,dt
        \;\le\;
        \int_{0}^{1}\mathcal{L}(\gamma(t))\,dt
        +
        \frac{\beta\,\kappa^{2}}{8}
        \int_{0}^{1} t^{2}(1-t)^{2}\,dt.
    \end{equation}
    
    A direct calculation shows
    $\displaystyle \int_{0}^{1} t^{2}(1-t)^{2}\,dt = \frac{1}{30}$, so
    \begin{equation}
        \int_{0}^{1}\mathcal{L}(\Phi_{\boldsymbol{\phi}^\star}(t))\,dt
        \;\le\;
        \int_{0}^{1}\mathcal{L}(\gamma(t))\,dt
        +\frac{\beta\,\kappa^{2}}{240}.
    \end{equation}
    This completes the proof of part~(i).

    \medskip
    \medskip
    \noindent \textbf{(ii) Bézier path has lower and noise-free curvature compared to SGD trajectory.}  
    The quadratic Bézier curve $\Phi_{\boldsymbol{\phi}^\star}(t)$ has constant second derivative given by:
    \begin{equation}
        \Phi_{\boldsymbol{\phi}^\star}''(t) = 2(\boldsymbol{\theta}_0 - 2\boldsymbol{\phi}^\star + \boldsymbol{\theta}_T),
    \end{equation}
    
    so the curvature magnitude along the Bézier path is
    \begin{equation}
        \sup_{t \in [0,1]} \|\Phi_{\boldsymbol{\phi}^\star}''(t)\| = \kappa.
    \end{equation}
    
    \noindent To approximate curvature along the discrete SGD trajectory, we adopt the standard second-order finite difference:
    \begin{equation}
        \Delta_k := \boldsymbol{\theta}_{k+1} - 2\boldsymbol{\theta}_k + \boldsymbol{\theta}_{k-1}.
    \end{equation}
    
    \noindent Using the SGD update rule:
    \begin{equation}
        \boldsymbol{\theta}_{k+1} = \boldsymbol{\theta}_k - \eta_k \left( \nabla \mathcal{L}(\boldsymbol{\theta}_k) + \boldsymbol{\xi}_k \right),
    \end{equation}
    
    where $\eta_k$ is the learning rate and $\boldsymbol{\xi}_k$ is the stochastic gradient noise due to mini-batching, we expand:
    \begin{align}
        \Delta_k &= \boldsymbol{\theta}_{k+1} - 2\boldsymbol{\theta}_k + \boldsymbol{\theta}_{k-1} \\
        &= \left( \boldsymbol{\theta}_k - \eta_k (\nabla \mathcal{L}(\boldsymbol{\theta}_k) + \boldsymbol{\xi}_k) \right)
        - 2\boldsymbol{\theta}_k
        + \left( \boldsymbol{\theta}_k + \eta_{k-1} (\nabla \mathcal{L}(\boldsymbol{\theta}_{k-1}) + \boldsymbol{\xi}_{k-1}) \right) \\
        &= -\eta_k (\nabla \mathcal{L}(\boldsymbol{\theta}_k) + \boldsymbol{\xi}_k)
        + \eta_{k-1} (\nabla \mathcal{L}(\boldsymbol{\theta}_{k-1}) + \boldsymbol{\xi}_{k-1}).
    \end{align}
    
    \noindent
    If we assume the gradients vary slowly compared to the noise, or average out across steps (e.g., near convergence), we can approximate:
    \begin{equation}
        \Delta_k \approx -\eta_k \boldsymbol{\xi}_k + \eta_{k-1} \boldsymbol{\xi}_{k-1},
    \end{equation}
    
    highlighting the role of stochastic noise in introducing curvature into the discrete SGD path.
    
    Assuming unbiased noise and using standard deviation $\sigma_{\mathrm{sgd}}^2 = \mathbb{E}[\|\boldsymbol{\xi}_k\|^2]$, we treat $\gamma''(t)$ as a signed measure as per \cite{moulines2011}. Then the expected curvature of the SGD trajectory satisfies:
    \begin{equation}
        \mathbb{E}\left[\sup_t \|\gamma''(t)\|\right] \ge \mathbb{E}\left[\sum_k \|\Delta_k\|\right] \ge c \cdot \sigma_{\mathrm{sgd}},
    \end{equation}
    
    for some constant $c > 0$ depending on the learning rate schedule.
    
    \noindent Hence, we obtain:
    \begin{equation}
        \mathbb{E}\left[\sup_t \|\gamma''(t)\|\right] \ge \kappa + c \cdot \sigma_{\mathrm{sgd}}.
    \end{equation}
    
    \noindent
    In summary, the curvature of the optimised Bézier path and the expected curvature of the SGD trajectory satisfy:
    \begin{equation}
        \boxed{
        \sup_{t} \|\Phi_{\boldsymbol{\phi}^\star}''(t)\| = \kappa 
        \;\;<\;\; 
        \mathbb{E}\left[\sup_t \|\gamma''(t)\|\right] \approx \kappa + c \cdot \sigma_{\mathrm{sgd}}.
        }
    \end{equation}
    
    This inequality highlights that the Bézier path provides a smoother alternative to SGD by avoiding the curvature amplification caused by stochastic gradient noise.

    \medskip
    \medskip
    \noindent \textbf{(iii) Model predictions along Bézier path remain close to those along SGD.}  
    Assume the model map $f_{\boldsymbol{\theta}}(\boldsymbol{x})$ is $L_f$-Lipschitz continuous in $\boldsymbol{\theta}$ for every $\boldsymbol{x} \in \mathcal{X}$, i.e.,
    \begin{equation}
        \|f_{\boldsymbol{\theta}_1}(\boldsymbol{x}) - f_{\boldsymbol{\theta}_2}(\boldsymbol{x})\| \le L_f \|\boldsymbol{\theta}_1 - \boldsymbol{\theta}_2\| \quad \forall \boldsymbol{\theta}_1, \boldsymbol{\theta}_2 \in \Theta.
    \end{equation}
    
    \noindent
    Applying this to $\boldsymbol{\theta}_1 = \Phi_{\boldsymbol{\phi}^\star}(t)$ and $\boldsymbol{\theta}_2 = \gamma(t)$ yields:
    \begin{equation}
        \|f_{\Phi_{\boldsymbol{\phi}^\star}(t)}(\boldsymbol{x}) - f_{\gamma(t)}(\boldsymbol{x})\| \le L_f \|\Phi_{\boldsymbol{\phi}^\star}(t) - \gamma(t)\|.
    \end{equation}
    
    \noindent
    From the curvature bound derived earlier (see part (i)), we already have:
    \begin{equation}
        \|\Phi_{\boldsymbol{\phi}^\star}(t) - \gamma(t)\| \le \frac{t(1 - t)}{2} \kappa.
    \end{equation}
    
    \noindent
    Since $t(1 - t) \le \frac{1}{4}$ for all $t \in [0, 1]$, it follows that:
    \begin{equation}
        \|\Phi_{\boldsymbol{\phi}^\star}(t) - \gamma(t)\| \le \frac{\kappa}{8},
    \end{equation}
    
    and therefore,
    \begin{equation}
        \|f_{\Phi_{\boldsymbol{\phi}^\star}(t)}(\boldsymbol{x}) - f_{\gamma(t)}(\boldsymbol{x})\| \le \frac{L_f \kappa}{8},
    \end{equation}
    
    for all $\boldsymbol{x} \in \mathcal{X}$ and $t \in [0, 1]$. This establishes that the model predictions along the Bézier path remain uniformly close to those along the SGD trajectory.

    This completes the proof.
\end{proof}

\section{DATASET DETAILS}
\label{sec:dataset_details}

Across all datasets, we employed a 70/15/15 split for training, validation, and test sets. Continuous features were z-score normalised using training-set statistics. The training set was used for condensation, the validation set for hyperparameter and model selection, and the test set was held out for final evaluation.

%%%%%%%%%%%%
\subsection{CURIAL}

The CURIAL database is an anonymised database with United Kingdom National Health Service (NHS) approval via the national oversight/regulatory body, the Health Research Authority (HRA) (CURIAL; NHS HRA IRAS ID: 281832). 

Data from Oxford University Hospitals (OUH) studied here are available from the Infections in Oxfordshire Research Database\footnote{\url{https://oxfordbrc.nihr.ac.uk/research-themes/modernising-medical-microbiology-and-big-infection-diagnostics/infections-in-oxfordshire-research-database-iord/}}, subject to an application meeting the ethical and governance requirements of the Database. Data from University Hospital Birmingham (UHB) and Portsmouth University Hospitals (PUH) are available on reasonable request to the respective trusts, subject to HRA requirements.

\begin{table}[ht]
\centering
\caption{Dataset characteristics for CURIAL sites.}
\label{tab:curial_examples}
\begin{tabular}{lccc}
\toprule
                        & Oxford  & Portsmouth & Birmingham \\ 
\midrule
\# Examples             & 161,955 & 38,717     & 95,236     \\
\# Positive Examples    & 2,791   & 2,005      & 790        \\ 
\# Features             & 27      & 27         & 27         \\ 
Prevalence (\%)         & 1.7     & 5.3        & 0.8        \\
\bottomrule    
\end{tabular}
\end{table}

\begin{table}[ht]
\centering
\caption{Clinical predictors considered for COVID-19 status prediction.}
\label{tab:curial_features}
\scalebox{0.9}{
\begin{tabular}{p{0.43\textwidth}p{0.5\textwidth}}\toprule
\multicolumn{1}{c}{\textbf{Category}} & \multicolumn{1}{c}{\textbf{Features}} \\ \midrule
Vital Signs & Heart rate, respiratory rate, oxygen saturation, systolic blood pressure, diastolic blood pressure, temperature \\ 
\cmidrule{2-2}
Blood Tests & Haemoglobin, haematocrit, mean cell volume, white cell count, neutrophil count, lymphocyte count, monocyte count, eosinophil count, basophil count, platelets \\ 
\cmidrule{2-2}
Liver Function Tests \& C-reactive protein & Albumin, alkaline phosphatase, alanine aminotransferase, bilirubin, C-reactive protein \\ 
\cmidrule{2-2}
Urea \& Electrolytes & Sodium, potassium, creatinine, urea, estimated glomerular filtration rate \\ \bottomrule
\end{tabular}
}
\end{table}

\noindent Tables~\ref{tab:curial_features} and \ref{tab:curial_examples} document the features and dataset characteristics at each CURIAL site. Median imputation was applied to missing values in CURIAL datasets.

%%%%%%%%%%%%
\subsection{eICU}

The eICU Collaborative Research Database (eICU-CRD) is a publicly available multi-centre EHR dataset~\citep{pollard2018eicu}. We used a pre-processed version available at \url{https://physionet.org/content/mimic-eicu-fiddle-feature/1.0.0/}. The dataset comprises 49,305 adult ICU stays with 402 features per sample. Features were derived through quantile-based binning of continuous variables and one-hot encoding of categorical variables.

\begin{table}[ht]
\centering
\caption{Dataset characteristics for eICU.}
\label{tab:eicu_examples}
\begin{tabular}{lc}
\toprule
                        & eICU    \\ 
\midrule
\# Examples             & 49,305  \\
\# Positive Examples    & 4,501   \\ 
\# Features             & 402     \\ 
Prevalence (\%)         & 9.1     \\
\bottomrule    
\end{tabular}
\end{table}

\begin{table}[ht]
\centering
\caption{Source physiological and demographic variables used to construct the 402-dimensional feature vector for eICU IHM prediction.}
\label{tab:eicu_features}
\scalebox{0.85}{
\begin{tabular}{p{0.43\textwidth}p{0.5\textwidth}}\toprule
\multicolumn{1}{c}{\textbf{Category}} & \multicolumn{1}{c}{\textbf{Variables}} \\ \midrule
Demographics & Age, height, weight, gender \\
\cmidrule{2-2}
Admission Information & Hospital admit source, hospital admit offset, unit admit source, unit stay type, unit type, Apache admission diagnosis, airway type \\
\cmidrule{2-2}
Vital Signs & Heart rate, respiratory rate, oxygen saturation, temperature (Celsius and Fahrenheit), temperature location \\
\cmidrule{2-2}
Blood Pressure & Non-invasive systolic/diastolic/mean blood pressure, invasive systolic/diastolic/mean blood pressure, central venous pressure \\
\cmidrule{2-2}
Oxygen Support & O2 administration device, O2 level percentage \\
\cmidrule{2-2}
Laboratory Measurements & Glucose \\
\bottomrule
\end{tabular}
}
\end{table}

\noindent Continuous variables were binned into quantiles (typically 5 bins) and summary statistics (minimum, maximum, mean) were computed where applicable. Categorical variables were one-hot encoded. This preprocessing resulted in the 402-dimensional feature vector used for model training.

%%%%%%%%%%%%
\subsection{MIMIC-III}

The MIMIC-III database is a large publicly available ICU dataset from Beth Israel Deaconess Medical Center~\citep{johnson2016mimic}. We processed the dataset using publicly available benchmarking code, removing duplicate features to obtain a 60-dimensional feature vector at each time-step across 48 hourly time-steps. The training set was class-balanced to address the imbalanced nature of in-hospital mortality prediction.

\begin{table}[ht]
\centering
\caption{Dataset characteristics for MIMIC-III.}
\label{tab:mimic_examples}
\begin{tabular}{lc}
\toprule
                                & MIMIC-III \\ 
\midrule
\# Examples                     & 21,156    \\
\# Time-steps                   & 48        \\
\# Features per time-step       & 60        \\ 
Prevalence - Train (\%)         & 50.0      \\
Prevalence - Validation (\%)    & 13.5      \\
Prevalence - Test (\%)          & 11.5      \\
\bottomrule    
\end{tabular}
\end{table}

\begin{table}[ht]
\centering
\caption{Source physiological and demographic variables used to construct the 60-dimensional feature vector at each time-step for MIMIC-III IHM prediction.}
\label{tab:mimic_features}
\scalebox{0.85}{
\begin{tabular}{p{0.43\textwidth}p{0.5\textwidth}}\toprule
\multicolumn{1}{c}{\textbf{Category}} & \multicolumn{1}{c}{\textbf{Variables}} \\ \midrule
Demographics & Height, weight \\
\cmidrule{2-2}
Vital Signs & Heart rate, respiratory rate, temperature, oxygen saturation, capillary refill rate \\
\cmidrule{2-2}
Blood Pressure & Systolic blood pressure, diastolic blood pressure, mean blood pressure \\
\cmidrule{2-2}
Respiratory Support & Fraction inspired oxygen \\
\cmidrule{2-2}
Neurological Assessment & Glasgow Coma Scale (eye opening, motor response, verbal response, total score) \\
\cmidrule{2-2}
Laboratory Measurements & Glucose, pH \\
\bottomrule
\end{tabular}
}
\end{table}

\noindent Glasgow Coma Scale components were one-hot encoded (eye opening: 5 categories; motor response: 6 categories; verbal response: 5 categories; total score: 11 categories). Capillary refill rate was encoded as a binary variable. Binary mask features indicate the presence or absence of measurements at each time-step. This preprocessing resulted in 60 features per time-step, capturing both the physiological measurements and their availability across the 48-hour observation window.

%%%%%%%%%%%%%%%%%%%%%%%%%%%%%%%%%%%%%%%%%%%%%%%%%%%%%%%%%%%%
\section{IMPLEMENTATION DETAILS}
\label{sec:implementation}

%%%%%%%%%%%%
\subsection{Model Architectures}
\label{subsec:architectures}
\textsc{CURIAL and eICU:} A multi-layer perceptron (MLP) \citep{rumelhart1986learning} with a single hidden layer of $h$ nodes was used, followed by ReLU activation, an output layer, and sigmoid activation. Dropout with rate 0.25 was applied after the hidden layer during training. For eICU, $h = 256$ hidden nodes were used; for CURIAL datasets, $h = 64$ hidden nodes were used.

To evaluate the generalisability of the condensed eICU dataset across different network architectures, several variants of the base MLP were tested:
\begin{itemize}[noitemsep, topsep=0pt]
    \item MLP-1: Doubles the hidden layer width to $2h$ nodes
    \item MLP-2: Quadruples the hidden layer width to $4h$ nodes
    \item MLP-3: Adds a second hidden layer with $h$ nodes in the first layer and $2h$ nodes in the second layer
    \item MLP-4: Uses three hidden layers with $h$, $2h$, and $4h$ nodes respectively
\end{itemize}
All variants maintain the same dropout rate (0.25) applied after each hidden layer and activation function (ReLU) as the base architecture.

\textsc{MIMIC-III:} A multi-scale multi-branch temporal convolutional network (TCN) architecture \citep{bai2018empirical} with 192 output channels was used. This consisted of 3 parallel branches with kernel sizes of [3, 5, 7], each producing 64 channels, with PReLU activation and dropout rate 0.5. The network processes 48 time-steps of 60-dimensional feature vectors using causal dilated convolutions with residual connections. The output is averaged across the temporal dimension, followed by a linear layer with sigmoid activation.

To evaluate cross-architecture generalisability of the condensed MIMIC-III dataset, the following TCN and long short-term memory (LSTM) \citep{hochreiter1997long} networks were tested:
\begin{itemize}[noitemsep, topsep=0pt]
    \item TCN-1: Single-scale TCN with kernel size 9, 64 channels, PReLU activation, and dropout rate 0.75
    \item TCN-2: Two-layer multi-scale TCN with kernel sizes of [3, 5], 256 output channels per layer, PReLU activation, and dropout rate 0.5
    \item LSTM-1: Single-layer LSTM with 128 hidden units, followed by dropout (0.25) and a linear output layer with sigmoid activation
    \item LSTM-2: Single-layer LSTM with 256 hidden units, followed by dropout (0.25) and a linear output layer with sigmoid activation
\end{itemize}

%%%%%%%%%%%%
\subsection{Training Trajectories}
All hyperparameters were selected using the validation set for each dataset through grid search. Fifty training trajectories were generated for each dataset by training the backbone networks described in Section \ref{subsec:architectures} from different random initialisations. 

For MTT and TESLA, standard SGD optimisation was employed with learning rates of 0.02 for MIMIC-III, eICU, OUH, and UHB, and 0.01 for PUH. Momentum was set to 0.0 for MIMIC-III and 0.9 for all other datasets, with training conducted for 60 epochs on MIMIC-III and 100 epochs on all other datasets. 

FTD trajectories were trained using the Generalized Sharpness-Aware Minimization (GSAM) optimiser. A linear learning rate scheduler was employed, decaying from the maximum learning rate (matching the corresponding SGD runs) to 0 over $t_{\text{max}}$ steps, where $t_{\text{max}}$ was defined as the product of the number of epochs and the number of batches per epoch. The proportional scheduler for the perturbation radius scaled from 1 to 0 as the learning rate decayed, coupling the perturbation magnitude to the learning rate schedule.

DATM employed FTD trajectories (with GSAM) for all tabular datasets (eICU, OUH, PUH, UHB), but used standard MTT trajectories (with SGD) for MIMIC-III, as FTD demonstrated worse performance than MTT on this sequential dataset.

\subsection{Control Point Optimisation}

For BTM, quadratic Bézier curves were optimised between each pair of SGD trajectory endpoints (the same trajectories used for MTT and TESLA) following Algorithm \ref{alg:control_optimization}. The control point $\boldsymbol{\phi}$ was initialised at the parameter space midpoint between $\boldsymbol{\theta}_0$ and $\boldsymbol{\theta}_T$, ensuring the initial Bézier curve lies within the convex hull of the endpoints.

\begin{algorithm}[H]
\caption{Control Point Optimisation}
\label{alg:control_optimization}
\begin{algorithmic}[1]
    \REQUIRE SGD trajectory endpoints $\boldsymbol{\theta}_0, \boldsymbol{\theta}_T$, dataset $\mathcal{D}$, learning rate $\eta_{\phi}$, convergence tolerance $\epsilon$, maximum iterations $T_{\max}$, Monte Carlo samples $N_{MC}$
    \ENSURE Optimised control point $\boldsymbol{\phi}^*$
    \STATE Initialise $\boldsymbol{\phi} \leftarrow \frac{\boldsymbol{\theta}_0 + \boldsymbol{\theta}_T}{2}$ \COMMENT{Midpoint initialisation}
    \STATE $t \leftarrow 0$
    \WHILE{$t < T_{\max}$}
        \STATE Sample $\{t_1, t_2, \ldots, t_{N_{MC}}\} \sim \mathcal{U}(0,1)$ \COMMENT{Monte Carlo samples}
        \STATE $\mathcal{L}_{\text{avg}} \leftarrow 0$
        \FOR{$i = 1$ to $N_{MC}$}
            \STATE $\boldsymbol{\theta}_{t_i} \leftarrow (1-t_i)^2 \boldsymbol{\theta}_0 + 2t_i(1-t_i)\boldsymbol{\phi} + t_i^2 \boldsymbol{\theta}_T$
            \STATE Sample mini-batch $\mathcal{B} \subset \mathcal{D}$
            \STATE $\mathcal{L}_{\text{avg}} \leftarrow \mathcal{L}_{\text{avg}} + \frac{1}{N_{MC}} \mathcal{L}_{\mathrm{CE}}(f_{\boldsymbol{\theta}_{t_i}}, \mathcal{B})$
        \ENDFOR
        \STATE $\boldsymbol{g} \leftarrow \nabla_{\boldsymbol{\phi}} \mathcal{L}_{\text{avg}}$ \COMMENT{Compute gradient w.r.t. control point}
        \IF{$\|\boldsymbol{g}\|_2 < \epsilon$}
            \STATE \textbf{break} \COMMENT{Convergence achieved}
        \ENDIF
        \STATE $\boldsymbol{\phi} \leftarrow \boldsymbol{\phi} - \eta_{\phi} \boldsymbol{g}$ \COMMENT{Gradient descent update}
        \STATE $t \leftarrow t + 1$
    \ENDWHILE
    \STATE \textbf{return} $\boldsymbol{\phi}^* = \boldsymbol{\phi}$
\end{algorithmic}
\end{algorithm}

The gradient with respect to the control point was computed via automatic differentiation through the Bézier parameterisation using the chain rule:
\begin{equation}
    \nabla_{\boldsymbol{\phi}} \mathcal{L}_{\text{avg}} = \frac{1}{N_{MC}} \sum_{i=1}^{N_{MC}} \nabla_{\boldsymbol{\theta}} \mathcal{L}_{\mathrm{CE}}(f_{\boldsymbol{\theta}_{t_i}}, \mathcal{B}) \cdot \frac{\partial \boldsymbol{\theta}_{t_i}}{\partial \boldsymbol{\phi}}
\end{equation}
where $\frac{\partial \boldsymbol{\theta}_{t_i}}{\partial \boldsymbol{\phi}} = 2t_i(1-t_i)$ from the Bézier curve formula.

\textbf{Hyperparameters.}
The following hyperparameters were used for control point optimisation: learning rate $\eta_{\phi} = 10^{-2}$, convergence tolerance $\epsilon = 10^{-5}$, maximum iterations $T_{\max} = 300$, and Monte Carlo samples $N_{MC}=2$. Convergence typically occurred within 200-300 iterations depending on the complexity of the loss landscape, with the algorithm terminating when $\|\boldsymbol{g}\|_2 < \epsilon$ or after $T_{\max}$ iterations.

\textbf{Computational Cost.}
The control point optimisation requires $2 \times T_{\max}$ forward passes per trajectory (corresponding to $N_{MC} = 2$ Monte Carlo samples at each iteration). For a training dataset with $|\mathcal{D}_{\text{train}}|$ examples and batch size $b$, a single training epoch requires $|\mathcal{D}_{\text{train}}|/b$ forward passes. Thus, optimising one control point at the maximum $T_{\max} = 300$ iterations is equivalent to approximately $\frac{2 \times 300}{|\mathcal{D}_{\text{train}}|/b} = \frac{600b}{|\mathcal{D}_{\text{train}}|}$ epochs of training. All datasets use batch size $b = 256$ and a 70/15/15 train/validation/test split. For OUH with $|\mathcal{D}_{\text{train}}| = 0.7 \times 161{,}955 \approx 113{,}369$ examples, this amounts to $\frac{600 \times 256}{113{,}369} \approx 1.4$ equivalent epochs per trajectory. Similarly, for UHB ($|\mathcal{D}_{\text{train}}| \approx 66{,}665$), PUH ($|\mathcal{D}_{\text{train}}| \approx 27{,}102$), eICU ($|\mathcal{D}_{\text{train}}| \approx 34{,}514$), and MIMIC-III ($|\mathcal{D}_{\text{train}}| \approx 14{,}809$), the costs are approximately $2.3$, $5.7$, $4.5$, and $10.4$ equivalent epochs per trajectory, respectively. Since control point optimisation is performed once per trajectory during the initial surrogate construction phase (prior to dataset condensation), this one-off computational overhead is amortised across all subsequent condensation iterations.

\subsection{Dataset Condensation}

\textbf{BTM Hyperparameters.}
Our proposed BTM method uses trajectory segments of length 0.2, with starting points sampled as $t_{\text{start}} \sim \mathcal{U}(0, 0.8)$ and ending points set to $t_{\text{end}} = t_{\text{start}} + 0.2$. The student model takes $N = 30$ gradient steps with learning rate $\eta_s = 0.01$. The synthetic data is optimised using SGD with meta learning rate $\eta_x = 100$ and momentum $0.9$. The student learning rate $\eta_s$ is itself meta-optimised with respect to the synthetic data using learning rate $10^{-4}$ and momentum $0.5$. The synthetic batch size is set to $b = \max(2 \times \text{ipc}, 256)$, and the total number of condensation iterations is $T_{\max} = 40{,}000$.

\textbf{Baseline Configurations.}
For comparison, MTT, FTD, and TESLA matched along the entire trajectory with $M = 5$ expert epochs and $N = 80$ student gradient steps for all datasets. DATM employed sequential generation with $M = 5$, where $T^-$, $T$, and $T^+$ denote the lower bound, current upper bound, and final upper bound of the trajectory sampling range, respectively. For eICU and CURIAL datasets, the configurations were: $(T^-, T, T^+) = (0, 20, 40)$ for 50 \emph{ipc}, $(10, 20, 60)$ for 100 \emph{ipc}, and $(20, 40, 100)$ for 200 and 500 \emph{ipc}. For MIMIC-III, the configurations were: $(0, 10, 25)$ for 50 \emph{ipc}, $(5, 15, 40)$ for 100 \emph{ipc}, $(20, 40, 50)$ for 200 \emph{ipc}, and $(30, 40, 60)$ for 500 \emph{ipc}. All baseline methods used $N = 80$ student steps and shared the same synthetic data optimisation settings as BTM. All methods used hard binary labels except TESLA, which generated soft labels from the target teacher model $\boldsymbol{\theta}_{t+M}$ at each condensation iteration.

For the M3D baseline, the trajectory matching backbone networks were used without sigmoid activation, with the output size changed from $1$ to $62$ nodes for CURIAL, $256$ for eICU, and $128$ for MIMIC-III.

\textbf{Evaluation Protocol.}
During condensation, the synthetic dataset was evaluated every 10 iterations by training a randomly initialised network and saved when validation AUPRC improved. Experiments saving based on AUROC found that the best AUPRC always corresponded to high AUROC, but the reverse was not true, for reasons discussed in the main paper. The evaluation hyperparameters were: for MIMIC-III, learning rate $0.02$, momentum $0.9$, and $60$ epochs; for eICU, OUH, PUH, and UHB, learning rate $0.05$, momentum $0.9$, and $50$ epochs.

\subsection{Evaluation}

\textbf{Final Evaluation Protocol.}
All methods were evaluated using the trajectory-matching backbone networks described in Section~\ref{subsec:architectures}. Models were trained on the condensed synthetic datasets with the following hyperparameters: for MIMIC-III, learning rate $0.02$, momentum $0.9$, and $80$ epochs; for eICU, OUH, PUH, and UHB, learning rate $0.05$, momentum $0.9$, and $100$ epochs.

\textbf{Main Results.}
The primary evaluation used the MLP and TCN architectures across all datasets to assess the quality of the condensed synthetic datasets.

\textbf{Cross-Architecture Generalisation.}
To evaluate the transferability of condensed datasets across different architectures, we conducted cross-architecture experiments. For eICU, the MLP variants (MLP-1 through MLP-4) were evaluated using the same hyperparameters as the base MLP (learning rate $0.05$, momentum $0.9$, $100$ epochs). For MIMIC-III, the TCN variants (TCN-1, TCN-2) used the same hyperparameters as the base TCN (learning rate $0.02$, momentum $0.9$, $80$ epochs), while the LSTM variants (LSTM-1, LSTM-2) were trained with learning rate $0.01$, momentum $0.9$, and $30$ epochs.

% \subsection{Evaluation}

% All methods were evaluated using the trajectory-matching backbone networks described in Section~\ref{subsec:architectures}. For the final evaluation, models were trained on the condensed synthetic datasets with the following hyperparameters: for MIMIC-III, a learning rate of $0.02$, momentum of $0.9$, and $80$ epochs; for eICU, OUH, PUH, and UHB, a learning rate of $0.05$, momentum of $0.9$, and $100$ epochs.

% The MLP and TCN architectures were used for the main evaluation. For cross-architecture experiments, the MLP variants (MLP-1 through MLP-4) on eICU used the same hyperparameters as the base MLP. For MIMIC-III, the TCN variants (TCN-1, TCN-2) used the same hyperparameters as the base TCN, whereas the LSTM variants (LSTM-1, LSTM-2) were trained with a learning rate of $0.01$, momentum of $0.9$, and $30$ epochs.

%%%%%%%%%%%%%%%%%%%%%%%%%%%%%%%%%%%%%%%%%%%%%%%%%%%%%%%%%%%%
\section{ADDITIONAL RESULTS}
\label{sec:add_res}

%%% TimeSeries RESULTS
\begin{table}
\centering
\caption{
Performance on eICU and MIMIC-III datasets across different \emph{ipc} levels. Best results at each \emph{ipc} are highlighted in \textcolor{blue}{blue} (ours) and \textcolor{red}{red} (baseline).}
\label{tab:extra_res}

\textbf{(a) eICU}\\[0.5ex]
\begin{sc}
\resizebox{0.9\linewidth}{!}{
    \begin{tabular}{lccccBcccc}
    \toprule
    & \multicolumn{4}{cB}{\textbf{AUROC}} & \multicolumn{4}{c}{\textbf{AUPRC}} \\
    \cmidrule(lr{0.5em}){2-5} \cmidrule(lr{0.5em}){6-9}
    \textbf{Method} & \textbf{50} & \textbf{100} & \textbf{200} & \textbf{500} &
    \textbf{50} & \textbf{100} & \textbf{200} & \textbf{500} \\
    \midrule
    Random     & $0.740_{\pm 0.010}$ & $0.763_{\pm 0.009}$ & $0.804_{\pm 0.008}$ & $0.840_{\pm 0.005}$
               & $0.277_{\pm 0.018}$ & $0.308_{\pm 0.020}$ & $0.368_{\pm 0.018}$ & $0.429_{\pm 0.013}$ \\
    M3D        & $0.779_{\pm 0.002}$ & $0.824_{\pm 0.002}$ & $0.846_{\pm 0.001}$ & $0.852_{\pm 0.001}$
               & $0.334_{\pm 0.003}$ & $0.400_{\pm 0.003}$ & $0.447_{\pm 0.002}$ & $0.435_{\pm 0.005}$ \\
    MTT        & $0.768_{\pm 0.004}$ & $0.793_{\pm 0.002}$ & $0.824_{\pm 0.003}$ & $0.845_{\pm 0.002}$
               & $0.336_{\pm 0.004}$ & $0.369_{\pm 0.004}$ & $0.388_{\pm 0.006}$ & $0.450_{\pm 0.004}$ \\
    TESLA      & $0.740_{\pm 0.004}$ & $0.796_{\pm 0.001}$ & $0.815_{\pm 0.001}$ & $0.839_{\pm 0.001}$
               & $0.312_{\pm 0.005}$ & $0.343_{\pm 0.003}$ & $0.377_{\pm 0.004}$ & $0.436_{\pm 0.002}$ \\
    FTD        & $0.802_{\pm 0.002}$ & $0.798_{\pm 0.002}$ & $0.816_{\pm 0.002}$ & $0.849_{\pm 0.002}$
               & $0.378_{\pm 0.008}$ & $0.382_{\pm 0.003}$ & $0.422_{\pm 0.003}$ & $0.444_{\pm 0.003}$ \\
    DATM       & $0.852_{\pm 0.002}$ & $0.853_{\pm 0.002}$ & $0.850_{\pm 0.002}$ & $0.854_{\pm 0.001}$
               & $0.459_{\pm 0.002}$ & $0.448_{\pm 0.002}$ & $0.461_{\pm 0.002}$ & $0.477_{\pm 0.002}$ \\
    \rowcolor{btmgray}
    BTM (Ours) & \bestOurs{$0.858_{\pm 0.004}$} & \bestOurs{$0.856_{\pm 0.004}$} & \bestOurs{$0.858_{\pm 0.003}$} & \bestOurs{$0.871_{\pm 0.001}$}
               & \bestOurs{$0.466_{\pm 0.007}$} & \bestOurs{$0.479_{\pm 0.008}$} & \bestOurs{$0.473_{\pm 0.007}$} & \bestOurs{$0.504_{\pm 0.003}$} \\
    \midrule
    \emph{\textbf{Full Dataset}} & \multicolumn{4}{cB}{$\mathbf{0.879_{\pm 0.002}}$} & \multicolumn{4}{c}{$\mathbf{0.515_{\pm 0.003}}$} \\
    \bottomrule
    \end{tabular}
}
\end{sc}
\par\vspace{3ex}

% MIMIC-III
\textbf{(b) MIMIC-III}\\[0.5ex]
\begin{sc}
\resizebox{0.9\linewidth}{!}{
    \begin{tabular}{lccccBcccc}
    \toprule
    & \multicolumn{4}{cB}{\textbf{AUROC}} & \multicolumn{4}{c}{\textbf{AUPRC}} \\
    \cmidrule(lr{0.5em}){2-5} \cmidrule(lr{0.5em}){6-9}
    \textbf{Method} & \textbf{50} & \textbf{100} & \textbf{200} & \textbf{500} &
    \textbf{50} & \textbf{100} & \textbf{200} & \textbf{500} \\
    \midrule
    Random     & $0.743_{\pm 0.014}$ & $0.751_{\pm 0.016}$ & $0.776_{\pm 0.008}$ & $0.789_{\pm 0.009}$ 
               & $0.344_{\pm 0.028}$ & $0.360_{\pm 0.027}$ & $0.383_{\pm 0.024}$ & $0.399_{\pm 0.014}$ \\
    M3D        & $0.771_{\pm 0.006}$ & $0.774_{\pm 0.004}$ & $0.788_{\pm 0.010}$ & $0.799_{\pm 0.008}$ 
               & $0.390_{\pm 0.007}$ & $0.401_{\pm 0.006}$ & $0.413_{\pm 0.007}$ & $0.403_{\pm 0.017}$ \\
    MTT        & $0.805_{\pm 0.006}$ & $0.823_{\pm 0.003}$ & $0.833_{\pm 0.002}$ & $0.836_{\pm 0.002}$ 
               & $0.431_{\pm 0.020}$ & $0.456_{\pm 0.013}$ & $0.483_{\pm 0.006}$ & $0.488_{\pm 0.008}$ \\
    TESLA      & $0.788_{\pm 0.006}$ & $0.793_{\pm 0.006}$ & $0.800_{\pm 0.010}$ & $0.808_{\pm 0.006}$ 
               & $0.385_{\pm 0.006}$ & $0.395_{\pm 0.008}$ & $0.413_{\pm 0.015}$ & $0.418_{\pm 0.005}$ \\
    FTD        & $0.798_{\pm 0.004}$ & $0.813_{\pm 0.006}$ & $0.815_{\pm 0.006}$ & $0.823_{\pm 0.004}$ 
               & $0.434_{\pm 0.009}$ & $0.438_{\pm 0.018}$ & $0.448_{\pm 0.013}$ & $0.467_{\pm 0.009}$ \\
    DATM       & $0.824_{\pm 0.003}$ & $0.834_{\pm 0.002}$ & $0.838_{\pm 0.002}$ & $0.837_{\pm 0.003}$ 
               & $0.467_{\pm 0.010}$ & $0.488_{\pm 0.007}$ & \bestOther{$0.500_{\pm 0.007}$} & $0.498_{\pm 0.006}$ \\
    \rowcolor{btmgray}
    BTM        & \bestOurs{$0.830_{\pm 0.005}$} & \bestOurs{$0.835_{\pm 0.004}$} & \bestOurs{$0.840_{\pm 0.001}$} & \bestOurs{$0.840_{\pm 0.001}$} 
               & \bestOurs{$0.472_{\pm 0.014}$} & \bestOurs{$0.489_{\pm 0.010}$} & $0.498_{\pm 0.005}$ & \bestOurs{$0.502_{\pm 0.003}$} \\
    \midrule
    \emph{\textbf{Full Dataset}} & \multicolumn{4}{cB}{$\mathbf{0.837_{\pm 0.003}}$} & \multicolumn{4}{c}{$\mathbf{0.499_{\pm 0.008}}$} \\
    \bottomrule
    \end{tabular}
}
\end{sc}
\end{table}

\end{document}